\numberwithin{equation}{section}
\newtheorem{theorem}{Theorem}[section]
\newtheorem{lemma}{Lemma}[section]
\newtheorem{proposition}[theorem]{Proposition}
\theoremstyle{definition}
\newtheorem{definition}{Definition}[section]
\newtheorem{example}{Example}[section]
\theoremstyle{remark}
\newtheorem{assumption}{Assumption}[section]
\newcommand{\Rmnum}[1]{\expandafter\@slowromancap\romannumeral #1@}
\newcommand{\eins}{\boldsymbol{1}}
\newcommand{\argmin}{\operatornamewithlimits{arg \, min}}
\newcommand{\sign}{\text{sign}}
\newcommand{\blind}{1}
\begin{document}
\def\spacingset#1{\renewcommand{\baselinestretch}%
{#1}\small\normalsize} \spacingset{1}


\if1\blind
{
  \title{\bf Best-scored Random Forest Classification}
  \author{Hanyuan Hang$^*$, 
  	Xiaoyu Liu$^*$, 
  	and Ingo Steinwart$^{**}$
  	\\
    ${}^*$Institute of Statistics and Big Data,
    Renmin University of China 
    \and
    ${}^{**}$Institute of Stochastics and Applications,
    University of Stuttgart
}
  \maketitle
} \fi

\if0\blind
{
  \bigskip
  \bigskip
  \bigskip
  \begin{center}
    {\LARGE\bf xxx}
\end{center}
  \medskip
} \fi

\bigskip
\begin{abstract}
We propose an algorithm named best-scored random forest for binary classification problems. The terminology \emph{best-scored} means to
select the one with the best empirical performance out of a certain number of purely random tree candidates as each single tree in the forest. In this way,
the resulting forest can be more accurate than the original purely random forest. 
From the theoretical perspective, 
within the framework of regularized empirical risk minimization penalized on the number of splits,
we establish almost optimal convergence rates for the proposed best-scored random trees under certain conditions which can be extended to the best-scored random forest. 
In addition, we present a counterexample to illustrate that in order to ensure the consistency of the forest, every dimension must have the chance to be split.
In the numerical experiments, 
for the sake of efficiency, we employ an adaptive random splitting criterion. 
Comparative experiments with other state-of-art classification methods demonstrate the accuracy of our best-scored random forest.
\end{abstract}

\noindent%
{\it Keywords:} 
purely random decision tree, 
random forest, 
ensemble learning, 
regularized empirical risk minimization, 
classification,  
statistical learning theory
\vfill
\newpage
\spacingset{1.45}

\section{Introduction} \label{sec::introduction}
Ensemble methods
are machine learning methods where each learner provides an estimate of the target variables and all estimates are then combined in some fashion hopefully to reduce the generalization error compared to a single learner. Random forest, devised by Breiman in the early 2000s \citep{breiman2001random}, is on top of the list of the most successful ensemble methods currently applied to deal with a wide range of prediction problems. 
With the help of combining several randomized decision trees during the training phase and aggregating their predictions by averaging or voting, this supervised learning procedure has shown high-quality performance in settings where the number of variables involved is much larger than the number of observations. 
Moreover, the applications of various versions of random forest in a large number of fields including bioinformatics \citep{D06}, survival analysis \citep{Ishwaran08}, 3D face analysis \citep{Fanelli13}, cancer detection \citep{Paul18}, stereo matching \citep{Park18,Kim17}, head and body orientation \citep{Lee17}, head and neck CT images for radiotherapy planning \citep{Wang18}, 3D human shape tracking \citep{Huang17}, gaze redirection problem in images \citep{Kononenko17}, salient object detection and segmentation \citep{Song17}, organ segmentation \citep{Farag17}, visual attribute prediction \citep{Li16}, scene labeling \citep{Cordts17} further demonstrate the practical effectiveness of the algorithm, where splitting criterions utilizing sample information such as those based on information gain \citep{Quinlan86}, information gain ratio \citep{Quinlan93} and Gini dimension \citep{Breiman84} are employed. However, in general,
from the statistical perspective,
those variants of random forest classifiers are not universal consistent. 
For example,
\cite{Biau08} constructs a specific distribution as a counterexample where the classifier does not converge due to the nature of Gini dimension criterion. 

Because of its wide applications, efforts have been paid in the random forest society to further investigate various versions of random forests from the theoretical perspective. For instance, \cite{ArGe14} studies the approximation error of some purely random forest models and shows the error of an infinite forest decreases at a faster rate (with respect to the size of each tree) than a single tree. Furthermore, concerning with classification problems,
\cite{Biau08} established weak convergence of 
the purely random forest, which is
the radically simplest version of random forest classifiers \citep{Breiman00}, while
\cite{Biau12} shows that Breiman's procedure is consistent and adapts to sparsity, in the sense that the rate of convergence depends only on the number of strong/active features, and
\cite{Scornet15} proves the $L_2$ consistency for Breiman's original algorithm in the context of additive models.
Moreover,
\cite{WaWa14} builds an adaptive concentration as a framework for describing the statistical properties of adaptively grown trees by viewing training trees as occurring in two stages. 
Finally, 
\cite{MoGa17} establishes the consistency of modified Mondrian Forests \citep{LaRo14} that can be implemented online while achieving the minimax rate for the estimation of Lipschitz continuous functions.

In this paper, we propose a random forest algorithm named \textit{best-scored random forest} which not only achieves almost optimal convergence rates, but also 
enjoys satisfactory performance on several benchmark data sets.
The main contributions of this paper are twofold: 
\emph{(i)} Concerning with theoretical analysis, we establish almost optimal convergence rates for the best-scored random trees under certain restrictions and successfully extend it to the case of best-scored random forest. The convergence analysis is conducted under the framework of regularized empirical risk minimization. 
As is in the traditional learning theory analysis where decomposing the error term into bias and variance terms is attached great significance, we proceed the analysis by decomposing the error term into data-free and data-dependent error terms, respectively. To be more precise, the data-free error term can be dealt with by applying techniques from the approximation theory whereas the data-dependent error term can be resolved through exploiting arguments from the empirical process theory, such as oracle inequalities with regard to regularized empirical risk minimization in our case. 
	In order to have a more rigorous understanding on the consistency of random forest, we present a counterexample which explicitly demonstrate that all dimensions should have the probability to be split from in order to achieve the consistency.
\emph{(ii)} 
	When it comes to numerical experiments, 
	we first improve the original random splitting criterion by proposing an adaptive random partition method, which
    differs from the purely random partition in its conducting the node selection process in another way. Specifically, at each step, we need to randomly select a sample point from the training data set and the to-be-split node is the one which this point falls in. This idea follows the fact that when randomly picking sample points from the whole training data set, nodes with more samples will be more likely to be selected while nodes with fewer samples are less possible to be chosen. Consequently, we have a greater probability to obtain cells with sample sizes evenly distributed.
	Empirical experiments further show that the adaptive/recursive method enhances the efficiency of the algorithm for it actually increases the effective number of splits compared to the original purely random partition method. 

The rest of this paper is organized as follows: Section \ref{sec::preliminaries} introduces some fundamental notations and definitions related to the best-scored random forest. We provide our main results and statements on the oracle inequalities and learning rates of the best-scored random forest in Section \ref{sec::MainResults}. Some comments and discussions on the main results will be also presented in this section. The main analysis on bounding error terms is provided in Section \ref{sec::ErrorAnalysis}. 
A counterexample aiming at giving a more rigorous understanding of the consistency of random forest is presented in Section \ref{sec::Counterexample}.
Numerical experiments conducted upon comparisons between best-scored random forest and other classification methods are given in Section \ref{sec::experiments}. All the proofs of Section \ref{sec::MainResults} and Section \ref{sec::ErrorAnalysis} can be found in Section \ref{sec::proofs}. Finally, we conclude this paper with a brief discussion in Section \ref{sec::conclusion}.

\section{Preliminaries} \label{sec::preliminaries}

\subsection{Notations} \label{subsec::notations}
Throughout this paper, we suppose that the data set $D_n = \{(x_1, y_1), \ldots, (x_n, y_n)\}$ given is of independent and identically distributed $\mathbb{R}^d \times \{-1, 1\}$-valued random variables with the same distribution as the generic pair $(X, Y)$, where $X$ is the feature vector while $Y$ is the binary label. The joint distribution $\mathrm{P}$ of $(X, Y)$ is determined by the marginal distribution $\mathrm{P}_X$ of $X$ and the \textit{a posteriori} probability $\eta : \mathbb{R}^d \to [0, 1]$ defined by
\begin{align} \label{etaFunction}
\eta(x) : = \mathrm{P}(Y = 1 | X = x). 
\end{align}
The learning goal of binary classification is to find a decision function $f : \mathbb{R}^d \to \{-1, 1\}$ such that for the new data $(x, y)$, we have $f(x) = y$ with high probability.

In order to precisely describe our learning goal, also for the ease of convenience, we suppose that $x \in \mathcal{X} := [0, 1]^d$ and $Y \in \mathcal{Y} : = \{ -1, 1 \}$. Under that condition, it is legitimate to consider the classification loss $L = L_{\text{class}} : \mathcal{Y} \times \{ -1, 1 \} \to \{ 0 , 1 \}$ defined by 
$L(y, t) := \eins_{\{y \neq t\}}$. 
We define the risk of the decision function $f$ by
\begin{align*}
\mathcal{R}_{L, \mathrm{P}}(f) : = \int_{\mathcal{X} \times \mathcal{Y}} L(y, f(x)) \, d\mathrm{P}(x, y) 
= \mathrm{P}(f(X) \neq Y),
\end{align*}
and the empirical risk by
\begin{align*}
\mathcal{R}_{L, D}(f) : = \frac{1}{n} \sum_{i=1}^n L(y_i, f(x_i)),
\end{align*}
where $D : = \frac{1}{n} \sum_{i=1}^n \delta_{(x_i, y_i)}$ denotes the average of Dirac measures at $(x_i, y_i)$. The smallest possible risk
$\mathcal{R}_{L, \mathrm{P}}^* : = \inf \{ \mathcal{R}_{L, \mathrm{P}}(f) | f : \mathcal{X} \to \mathcal{Y} \}$
is called the Bayes risk, and a measurable function $f^*_{L, \mathrm{P}} : \mathcal{X} \to \mathcal{Y}$ so that $\mathcal{R}_{L, \mathrm{P}}(f^*_{L, \mathrm{P}}) = \mathcal{R}_{L, \mathrm{P}}^*$ holds is called Bayes decision function. By simple calculation, we obtain the Bayes decision function
$
f^{*}_{L,\mathrm{P}} = \text{sign}(2\eta - 1).
$

\subsection{Purely Random Tree and Forest} \label{PRTF}


Considering how the purely random forest put forward by \cite{Breiman00} plays a vital and fundamental role in theory and practice, we base our following analysis on this specific kind of random forest. 
To illustrate 
one possible general construction process of one tree in the forest, some randomizing variables are in need to describe the selection process of the node, the coordinate and the cut point at each step.
Therefore, we introduce a random vector $Q_i : = (L_i, R_i, S_i)$ which describes the splitting rule at the $i$-th step of tree construction. 
To be specific, $L_i$ denotes the randomly chosen leaf to be split at the $i$-th step from the candidates which are defined to be all the leaves presented in the $(i-1)$-step, thus the leaf choosing procedure follows a recursive manner.
The random variable $R_i \in \{ 1, \ldots, d\}$ denotes the dimension chosen to be split from for the $L_i$ leaf. The random variables $\{R_i$, $i \in \mathbb{N}\}$  are independent and identically multinomial distributed with each dimension having equal probability to be chosen.
The random variables $\{S_i$, $i \in \mathbb{N}\}$ are independent and identically distributed from $\mathrm{Unif}(0, 1)$. These variables are recognized as proportional factors representing the ratio between the length of the newly generated leaf in the $R_i$-th dimension after completing the $i$-th split and the length of the being cut leaf $L_i$ in the $R_i$-th dimension. In other words, the length of the newly generated leaf in the $R_i$-th dimension can be calculated by multiplying length of the leaf $L_i$ in the $R_i$-th dimension and the proportional factor $S_i$.

\begin{figure*}[htbp]
	\begin{minipage}[t]{0.8\textwidth}  
		\centering  
		\includegraphics[width=\textwidth]{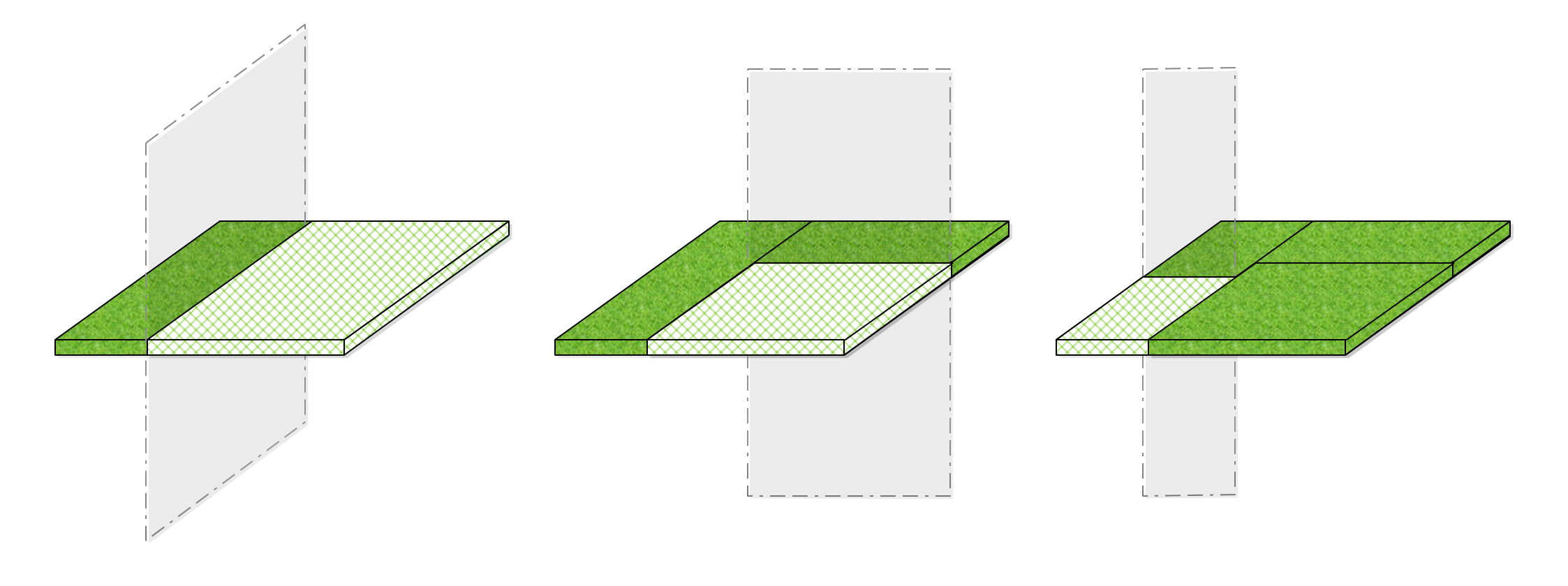}  
	\end{minipage}  
	\centering  
	\caption{Possible construction procedures of $3$-split axis-parallel purely random partitions in a $2$-dimensional space. }
	\label{fig:ap}
\end{figure*}

Statements mentioned above give a quantifiable description of the splitting process of the purely random decision tree. However, 
for a clearer understanding of this splitting approach, 
we might give one simple example where we develop a partition on $A = [0, 1]^d$.
To begin with, we randomly choose a dimension out of $d$ candidates and uniformly split at random from the chosen dimension so that $A$ is split into two leaves which are $A_{1, 1}$ and $A_{1, 2}$, respectively. Secondly, a leaf is chosen uniformly at random, say $A_{1, 1}$, and we once again randomly pick the dimension and do the split on that leaf, which leads to a partition consisting of $A_{2, 1}, A_{2,2}, A_{1,2}$. Thirdly, a leaf is randomly selected from all three leaves, say $A_{2,2}$,  and the third split is once again conducted on that chosen leaf with dimension and node chosen the same way as before, which leads to a partition consisting of $A_{2,1}, A_{3,1}, A_{3,2}, A_{1,2}$. The construction process 
will continue in this approach until the number of splits reaches our satisfaction. To notify, the above process can be extended to the general cases where construction is conducted on the feature space $\mathcal{X}$.
All of the above procedures lead to a so-called partition variable $Z$ which is defined by 
$Z:= (Q_0, Q_1, \ldots)$
and takes values in space $\mathcal{Z}$.
The probability measure of $Z$ is denoted as $\mathrm{P}_Z$.

Assume that any specific partition variable $Z \in \mathcal{Z}$  can be treated as a latent splitting policy. Considering a partition $Z$ with $p$ splits, we denote $\mathcal{A}_{Z, p} : = \mathcal{A}_{(Q_0, \ldots, Q_p)}$
and define the resulting collection of cells as $\mathcal{A}_{Z, p}$ which is a partition of $\mathcal{X}$, where $\mathcal{A}_{Z, 0} : = \mathcal{X}$. If we focus on certain sample point $x \in \mathcal{X}$, then the corresponding cell in which that point falls is defined by $A_{Z, p}(x)$. Here, we introduce the random tree decision rule,
that is a map $g_{Z, p} : \mathcal{X} \to \{-1, 1\}$ defined by
\begin{align} \label{TreeRule}
	g_{Z, p} (x) = \sign(h_{Z, p} (x)).
\end{align}
where
\begin{align*}
h_{Z, p} (x) : = \sum_{i=1}^n Y_i 
\eins_{\{ x_i \in A_{Z, p} (x) \}}.
\end{align*}

Assume that our random forest is determined by the latent splitting policies $Z = \{Z_1, \ldots, Z_m\}$ consisting of independent and identically distributed random variables drawn from $\mathcal{Z}$, and the number of splits for $m$ trees are presented as $p = (p_1, \ldots, p_m)$. As usual, the random forest decision rule can be defined by
\begin{align*}
f_{Z, p} (x) : = 
\begin{cases}
1 & \text{ if } \sum\limits_{t=1}^m g_{Z_t, p_t} (x) > 0 
\\
-1 & \text{ otherwise}.
\end{cases}
\end{align*}

\subsection{Best-scored Random Forest}

Considering that the preliminary work of analysis of a random forest is to focus on how to give appropriate partitions to several independent trees, we first define a function set which contains all the possible partitions as follows:
\begin{align} \label{Space}
\mathcal{T} 
:= \biggl\{ \sum_{j=0}^p c_j \boldsymbol{1}_{A_j} : p \in \mathbb{N}, c_j \in \{-1, 1\}, \bigcup_{j=0}^p A_j = \mathcal{X},  A_s \cap A_{\tilde{s}} = \emptyset, s \neq \tilde{s} \biggr\}.
\end{align}
In this paper, without loss of generality, we only consider cells with the shape of
$A_j = \bigotimes_{i=1}^d [a_{ij}, b_{ij}]$.
To be specific, choosing $p \in \mathbb{N}$ as the number of splits, the resulting leaves presented as $A_0, A_1, \ldots, A_p$ in fact construct a partition of $\mathcal{X}$ with $p$ splits. What we also attach great significance is that $c_j$ is the value of leaf $A_j$, and thus the set $\mathcal{T}$ contains all the potential decision rules.
Moreover, for fixed $p \in \mathbb{N}$,
we denote the collection of trees with number of splits $p$ as
\begin{align} \label{Tq}
\mathcal{T}_p : = \bigg\{\sum_{j=0}^p c_j \eins_{A_j} : c_j \in \{-1, 1\}, \bigcup_{j=0}^p A_j = \mathcal{X},
A_s \cap A_{\tilde{s}} = \emptyset, s \neq \tilde{s} \bigg\},
\end{align}
where we should emphasize that all trees in \eqref{Tq} must follow our specific construction procedure described in Section \ref{PRTF}.
It can be easily verified that the following nested relation holds:
\begin{align} \label{NestRelation}
\mathcal{T}_{q} \subset \mathcal{T}_p
\quad
\text{ for } \ q \leq p.
\end{align}

Let $k$ be a fixed number to be chosen later and assume that the forest consists of $m$ trees. 
For $t \in \{ 1, \ldots, m \}$, suppose that $Z_{1t}, \cdots, Z_{kt}$ are independent and identically distributed random variables drawn from $\mathcal{Z}$, which are also the splitting policies of those trees. For $\ell \in \{1, \ldots, k\}$, we might as well derive a random function set induced by that specific splitting policy $Z_{\ell t}$ after $p$ steps as
\begin{align} \label{SpaceZl}
\mathcal{T}_{Z_{\ell t}, p} : = \bigg\{\sum_{j=0}^{p} c_{j} \boldsymbol{1}_{A_j} : c_{j} \in \{-1,1\}, A_j \in \mathcal{A}_{Z_{\ell t}, p} \bigg\},
\end{align}
which is a subset of $\mathcal{T}$. We also denote $\mathcal{T}_{Z_{\ell t}} : = \bigcup_{p \in \mathbb{N}} \mathcal{T}_{Z_{\ell t}, p}$.

Having found an appropriate random tree decision rule under policy $Z_{\ell t}$ denoted as $g_{Z_{\ell t}}$, we are supposed to scrutinize the convergence properties of that rule. To this end, we need to introduce the framework of regularized empirical risk minimization,
see also Definition 7.18 in \cite{StCh08}.
Let 
$\mathcal{L}_0$ be the set of measurable functions on $\mathcal{X}$,
$\mathcal{F} \subset \mathcal{L}_0 ( \mathcal{X} )$ be a non-empty set,
$L : \mathcal{X} \times \mathcal{Y} \times \mathbb{R} \to [0, \infty)$ be a loss function, 
and
$\Omega : \mathcal{F} \to [0, \infty)$ be a function. 
A learning method (see e.g. Definition 6.1 in \cite{StCh08})
whose decision function $f_D$ satisfying
\begin{align*}
\mathcal{R}_{L, D} (f_D) + \Omega (f_D) = \inf_{f \in \mathcal {F}} \mathcal{R}_{L, D} (f) + \Omega (f)
\end{align*}
for all $n \ge 1$ and $D \in (\mathcal{X} \times \mathcal{Y})^n$ is called regularized empirical risk minimization.

To notify, we put forward an idea that the number of splits $p$ is the one should be penalized on. The reason why the penalization on $p$ is necessary is that it not only significantly reduces the huge amount of calculation, which then the number of splits is bounded and the function set has a finite VC dimension, but more importantly refrains from overfitting. With the same data set $D_n$, the above regularized empirical risk minimization problem with respect to each function space $\mathcal{T}_{Z_{\ell t}}$ turns into
\begin{align} \label{OptimizationProblem}
\min_{p \in \mathbb{N}} \min_{g \in \mathcal{T}_{Z_{\ell t}, p}} \ \lambda p^2 + \mathcal{R}_{L, D} (g),\ \ \ell = 1, \ldots, k.
\end{align}

It is noteworthy that the regularized empirical risk minimization under any policy can be bounded simply by having a quick look at the situation where no split is applied to $\mathcal{X}$. As a consequence, the optimization problem can be represented as
\begin{align*}
\min_{p \in \mathbb{N}} \min_{g \in \mathcal{T}_{Z_{\ell t}, p}} \ \lambda p^2 + \mathcal{R}_{L, D} (g)
\le \mathcal{R}_{L, D} (1) \le 1,
\end{align*}
where $\mathcal{R}_{L, D} (1)$ denotes the empirical risk for taking $g(x) = 1$ for all $x \in \mathcal{X}$ with $p = 0$. Thus, it can be apparently seen that the best number of splits $p$ is upper bounded by $\lambda^{-1/2}$, which leads to a capacity reduction of the underlying function set. Therefore, the following function spaces will be all added an extra condition $p \leq \lambda^{-1/2}$.

According to the random tree decision rule \eqref{TreeRule},
our goal is to solve
the above optimization problem \eqref{OptimizationProblem}
which can be further denoted by
\begin{align*}
(g_{Z_{\ell t}}, \ p_{Z_{\ell t}})
= \argmin_{p \in \mathbb{N}} \ \argmin_{g \in \mathcal{T}_{Z_{\ell t}, p}} \ \lambda p^2 + \mathcal{R}_{L, D} (g),
\quad \quad 
\ell = 1, \ldots, k,
\end{align*}
where $p_{Z_{\ell t}}$ is the number of splits of the decision function $g_{Z_{\ell t}}$.
Its population version can be denoted by
\begin{align} \label{GLP}
(g_{Z_{\ell t}}^*,\ p_{Z_{\ell t}}^*) = \argmin_{p \in \mathbb{N}} \  \argmin_{g \in \mathcal{T}_{Z_{\ell t}, p}} \ \lambda p^2 + \mathcal{R}_{L, \mathrm{P}} (g), 
\quad \quad 
\ell = 1, \ldots, k.
\end{align}

The fact that directly aggregating all random trees at hand is not always sensible, since some of them may not be able to classify the data with proper manners. For this reason, we advocate a new method named as best-scored random forest. Every tree in the random forest is chosen from $k$ candidates and the main principle is to retain only the tree yielding the minimal regularized empirical risk, which is
\begin{align}  \label{BestScoreTree}
(g_{Z_t}, \ p_{Z_t})
= \argmin_{\ell=1, \ldots, k} \ \lambda p^2_{Z_{\ell t}} + \mathcal{R}_{L, D} (g_{Z_{\ell t}}),
\end{align}
where $p_{Z_t}$ is the number of splits of $g_{Z_t}$ and $Z_t = \{ Z_{1t}, \ldots, Z_{kt} \}$.
Apparent as it is, $g_{Z_t}$ is the regularized empirical risk minimizer with respect to the random function space 
$
\mathcal{T}_{Z_t} : = \bigcup_{\ell=1}^k \mathcal{T}_{Z_{\ell t}}.
$
In other words, $g_{Z_t}$ is the solution to the regularized empirical risk minimization problem
\begin{align*}
\min_{g \in \mathcal{T}_{Z_t}} \ \lambda p^2 (g) + \mathcal{R}_{L, D} (g)
: = \min_{\ell=1, \ldots, k} \ \min_{p \in \mathbb{N}} \ \min_{g \in \mathcal{T}_{Z_{\ell t}, p}} \ \lambda p^2 + \mathcal{R}_{L, D}(g).
\end{align*}
Similarly, $g_{Z_t}^*$ is denoted as the solution of the population version of regularized minimization problem in the space $\mathcal{T}_{Z_t}$
\begin{align} \label{BestScoreMinimizerPopulation}
( g_{Z_t}^*, \, p_{Z_t}^*) 
= \argmin_{g \in \mathcal{T}_{Z_t}} \ \lambda p^2 (g) + \mathcal{R}_{L, \mathrm{P}} (g) 
= \argmin_{\ell = 1, \ldots, k}\ \lambda p_{Z_{\ell t}}^{*2} + \mathcal{R}_{L, \mathrm{P}}(g_{Z_{\ell t}}^*).
\end{align}
Again, $p_{Z_t}^*$ is the corresponding number of splits of 
$g_{Z_t}^*$.

\section{Main Results and Statements}\label{sec::MainResults}
In this section, we present main results on the oracle inequalities and learning rates for the best-scored random tree and forest. More precisely, section \ref{FundamentalAssumption} gives the fundamental assumptions for the analysis of our classification algorithm. Section \ref{sec::OracleInequality} is devoted to the oracle inequality of the best-scored random trees. Then, in section \ref{LearningRates}, we use the established oracle inequalities to derive learning rates. On account of the results of those base classifiers, learning rates of the ensemble forest will be established in section \ref{sec::LearningRatesEnsemble}. Finally, we present some comments and discussions concerning the obtained main results.

\subsection{Fundamental Assumptions} \label{FundamentalAssumption}
To present our main results, we need to make assumptions on 
the behavior of $\mathrm{P}$ in the vicinity of the decision boundary
by means of the posterior probability $\eta$ defined as in \eqref{etaFunction}.
To this end, we write
\begin{align} \label{DistanceSet}
\mathcal{X}_{1} & =  \{ x \in \mathcal{X} : \eta(x) > 1/2 \},
\nonumber \\
\mathcal{X}_{0} & = \{ x \in \mathcal{X} : \eta(x) = 1/2 \},
\nonumber \\
\mathcal{X}_{-1} & = \{ x \in \mathcal{X} : \eta(x) < 1/2\},
\end{align}
and define the distance to the decision boundary by
\begin{align} \label{DistanceBound}
\Delta(x) =
\begin{cases}
d(x, \mathcal{X}_1) & \text{ if } x \in \mathcal{X}_{-1},
\\
d(x, \mathcal{X}_{-1}) & \text{ if } x \in \mathcal{X}_1,
\\
0 & \text{ otherwise},
\end{cases}
\end{align}
where $d(x, A) = \inf_{x' \in A} d(x, x')$,
see also Definition 8.5 in \cite{StCh08}.
In the following, 
the distance $\|\cdot\|$ always denotes the $L_1$-norm
if not mentioned otherwise.

\begin{assumption} \label{NoiseExponent}
	The distribution  $\mathrm{P}$ on $\mathcal{X} \times \mathcal{Y}$ is said to have \emph{noise exponent} $\alpha \in [0, \infty]$ if there exists a constant $c_\alpha > 0$ such that
	\begin{align*}
	\mathrm{P}_X \bigl( \{ x \in \mathcal{X} : |2 \eta(x) - 1| \leq h \} \bigr) 
	\leq c_\alpha h^\alpha,
	\quad
	h \geq 0.
	\end{align*}
\end{assumption}
The notion of the noise exponent can be traced back to \cite{Tsybakov04}. 
Tsybakov's noise assumption is extensively used in the literature of dynamical systems \citep{Hang17}, Neyman-Pearson classification \citep{Zhao16}, active learning \citep{Hanneke15}, bipartite ranking \citep{Agarwal14}, etc. Assumption \ref{NoiseExponent} is intrinsically related to the analysis of the estimation error. Note that for any $x\in \mathcal{X}$, if $\eta(x)$ is close to $1/2$, then the amount of noise will be large in the labeling process at $x$. From that perspective, this assumption gives a measurement on the size of the set of points whose noise is high during the labeling process. As is widely acknowledged, points whose \emph{a posteriori} probability $\eta(x)$ are far away from $1/2$ are in favor since they provide a clear choice of labels. Consequently, the Assumption \ref{NoiseExponent} is used to guarantee that 
the probability that points with high noise occur is low. 
In that case, samples which are less useful in classification will be lesser and thus leads to a better analysis of the data-based error, that is the estimation error.

\begin{assumption} \label{MarginNoiseExponent}
	The distribution  $\mathrm{P}$ on $\mathcal{X} \times \mathcal{Y}$ has \emph{margin-noise exponent} $\beta \in [0, \infty)$ if there exists a constant $c_\beta > 0$ such that
	\begin{align*}
	\int_{\Delta(x) \le h}|2 \eta(x) - 1| \ d \mathrm{P}_X (x) \le c_\beta h^\beta,
	\quad
	h\geq 0.
	\end{align*}
\end{assumption}
The margin-noise exponent was put forward by \cite{Steinwart07} where the relationship between margin-noise exponent and noise exponent is analyzed as well. Assumption \ref{MarginNoiseExponent} measures the size of the set of points, denoted as $\{x\in \mathcal{X} : \Delta(x) \le h\}$, which are close to the opposite class and the integral is with respect to the measure $\|2 \eta - 1\| \, d \mathrm{P}_X$. This geometric noise assumption is essentially related to the approximation error, since in the context of random tree construction, only those cells which intersect the decision boundary contribute to the approximation error. Therefore, the concentration of mass near the decision boundary determines the approximation ability to some extent. 

The last assumption describes the relation between the distance to the decision boundary and the discrepancy of the posterior probability to level $1/2$, see also Definition 8.16 in \cite{StCh08}.
\begin{assumption}\label{ass::DistanceControlsNoise}
	Let $(\mathcal{X}, d)$ be a metric space, $\mathrm{P}$ be a distribution on $\mathcal{X} \times \mathcal{Y}$ , and $\eta : \mathcal{X} \to [0, 1]$ be a version of its posterior probability. We say that the associated \emph{distance to the decision boundary $\Delta$ defined by \eqref{DistanceSet} controls the noise by the exponent} $\gamma\in [0, \infty)$ if there exists a constant $c > 0$ such that
	\begin{align*}
	|2\eta(x)-1| \leq c\Delta(x)^{\gamma}
	\end{align*}
	holds for $\mathrm{P}_X$-almost all $x \in \mathcal{X}$.
\end{assumption}

Note that since $|2\eta(x) - 1| \leq 1$ for all $x \in \mathcal{X}$, the above assumption becomes trivial whenever $\Delta(x) \geq c^{-1/\gamma}$ . Consequently, the assumption only considers points  $x \in \mathcal{X}$ with sufficiently small distance to the opposite class. In short, it states that $\eta(x)$ is close to the level $1/2$ of \textit{complete noise} if $x$ approaches the decision boundary.

\subsection{Oracle Inequality for Best-scored Random Tree} \label{sec::OracleInequality}

We now establish an oracle inequality for the best-scored random tree. 

\begin{theorem} [Oracle inequality for best-scored random trees] \label{OracleInequality}
	Let	$L$ be the classification loss,
	$\alpha$ be the noise exponent as in Assumption \ref{NoiseExponent}, 
	$\vartheta = \alpha / (1 + \alpha)$.
	Then, for all fixed $k \in \mathbb{N}$, $\tau > 0$, $\lambda > 0$ and $\delta \in (0, 1)$, conditioned on any $Z_t = (Z_{1,t}, \ldots, Z_{k,t}) \in \mathcal{Z}^k$,
	the $t$-th best-scored random trees \eqref{BestScoreTree} 
	using $L$ satisfies
	\begin{align*}
	\lambda p_{Z_t}^2 + \mathcal{R}_{L, \mathrm{P}}(g_{Z_t}) - \mathcal{R}_{L, \mathrm{P}}^*
	& \leq 9 \bigl( \lambda p_{Z_t}^{*2} + \mathcal{R}_{L, \mathrm{P}}(g_{Z_t}^*) - \mathcal{R}_{L, \mathrm{P}}^* \bigr)
	\\
	& \phantom{=}
	+ C \biggl( \frac{1}{\lambda n^2} \biggr)^{\frac{1}{3 - 2 (1 - \delta) \vartheta}}
	+ 3 \biggl( \frac{72 V \tau}{n} \biggr)^{\frac{1}{2 - \vartheta}} 
	+ \frac{15 \tau}{n}
	\end{align*}
	with probability $\mathrm{P}_{| Z_{1,t}, \ldots, Z_{k,t}}$ at least $1 - 3 e^{- \tau}$, where 
	$V$ is a constant depending on $\alpha$
	and $C$ is a constant depending on $d$, $\delta$ and $\vartheta$
	which will be specified later in the proof.
\end{theorem}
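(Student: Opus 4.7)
The plan is to specialize the general oracle-inequality template for regularized empirical risk minimization (in the spirit of Theorem 7.20 in \cite{StCh08}) to the setting at hand. Conditioning on the splitting policies $Z_t = (Z_{1t},\ldots,Z_{kt})$ freezes the function class, so $g_{Z_t}$ becomes the ordinary regularized ERM over the (now deterministic) class $\mathcal{T}_{Z_t} = \bigcup_{\ell=1}^k \mathcal{T}_{Z_{\ell t}}$ with penalty $\lambda p(g)^2$. The elementary comparison $\mathcal{R}_{L,D}(g_{Z_t}) + \lambda p_{Z_t}^2 \leq \mathcal{R}_{L,D}(\eins) \leq 1$ from Section \ref{PRTF} restricts attention to the finite subclass $\bigcup_\ell \bigcup_{p \leq \lambda^{-1/2}} \mathcal{T}_{Z_{\ell t}, p}$.

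Before invoking the template I would prepare two inputs. First, a variance bound coming from Tsybakov's noise condition (Assumption \ref{NoiseExponent}): for every measurable $f:\mathcal{X}\to\{-1,1\}$,
\begin{align*}
\mathbb{E}_{\mathrm{P}}\bigl[(L\circ f - L\circ f^*_{L,\mathrm{P}})^2\bigr] \leq V\bigl(\mathcal{R}_{L,\mathrm{P}}(f) - \mathcal{R}_{L,\mathrm{P}}^*\bigr)^{\vartheta},
\end{align*}
with $\vartheta = \alpha/(1+\alpha)$ and a constant $V$ depending only on $\alpha$ (a standard consequence, cf.\ Chapter 8 of \cite{StCh08}). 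Second, a complexity bound: each $\mathcal{T}_{Z_{\ell t}, p}$ has cardinality $2^{p+1}$ since its elements are labelings of $p+1$ leaves, and an $L_2(\mathrm{P}_X)$-covering number at scale $\varepsilon$ can be bounded by an expression of the form $\varepsilon^{-2\delta(p+1)}$ for any fixed $\delta\in(0,1)$; this is precisely how the parameter $\delta$ enters the statement.

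With these ingredients in place, I would apply Bernstein's inequality in two guises. A uniform Bernstein bound, combined with peeling over the admissible values of $p$, yields on an event of $\mathrm{P}_{|Z_{1,t},\ldots,Z_{k,t}}$-probability at least $1 - 2e^{-\tau}$ a bound of the form
\begin{align*}
\mathcal{R}_{L,\mathrm{P}}(g) - \mathcal{R}_{L,\mathrm{P}}^* \leq 2\bigl(\mathcal{R}_{L,D}(g) - \mathcal{R}_{L,D}(f^*_{L,\mathrm{P}})\bigr) + \mathcal{E}(p,\tau,\delta,\lambda,n),
\end{align*}
uniformly in $g$ from the restricted class, while a single-point Bernstein inequality applied at $g_{Z_t}^*$ controls $\mathcal{R}_{L,D}(g_{Z_t}^*) - \mathcal{R}_{L,\mathrm{P}}(g_{Z_t}^*)$ on an additional $1-e^{-\tau}$ event. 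Inserting the ERM inequality $\lambda p_{Z_t}^2 + \mathcal{R}_{L,D}(g_{Z_t}) \leq \lambda p_{Z_t}^{*2} + \mathcal{R}_{L,D}(g_{Z_t}^*)$ into the uniform bound and switching to the population risk of $g_{Z_t}^*$ delivers the announced oracle inequality; the leading factor $9$ arises from two successive factor-$2$ absorptions (one to clear the empirical excess risk, one to absorb the penalty $\lambda p_{Z_t}^2$ on the left) plus the Bernstein deviation at $g_{Z_t}^*$, and the prefactor $3e^{-\tau}$ matches the three Bernstein applications used.

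The main obstacle will be the peeling step that produces the unusual exponent $1/(3-2(1-\delta)\vartheta)$. Because the relevant $L_2$-covering exponent grows like $\delta p$ while the penalty grows like $\lambda p^2$, Bernstein produces a stochastic term of order $(\delta p/n)^{1/(2-\vartheta)}$ that must be balanced against $\lambda p^2$; minimizing $\lambda p^2 + (\delta p/n)^{1/(2-\vartheta)}$ in $p$ gives the critical scale $p^\star \asymp (\lambda n^2)^{-1/(3-2(1-\delta)\vartheta)}$ and, upon substitution, the stated rate. Collecting all constants into a single $C = C(d,\delta,\vartheta)$ that remains finite for each fixed $\delta<1$ but may blow up as $\delta \to 1$ is the bookkeeping-heavy part; the remainder is a direct application of the standard regularized-ERM machinery to a frozen, finite function class.
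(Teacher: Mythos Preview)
Your overall architecture (freeze $Z_t$, variance bound from Tsybakov, complexity bound, then an oracle-inequality template) is right, but both your account of how $\delta$ enters and your balancing calculation are incorrect, and the paper in fact takes a different complexity route.

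The paper does \emph{not} exploit finiteness of the conditioned class. It embeds $\mathcal{G}_r\subset\mathcal{T}_q$ with $q=\lfloor(r/\lambda)^{1/2}\rfloor$, bounds the VC dimension of the associated set system $\mathcal{B}_q$ by $dq+2$ (Lemma~\ref{VCdimension}), and obtains $\log\mathcal{N}(\mathcal{H}_r,\|\cdot\|_{L_2(D)},\varepsilon)\lesssim dq\,\log(1/\varepsilon)$. The parameter $\delta$ is then purely a device to convert this logarithm into a polynomial, via $\sup_{\varepsilon\in(0,1)}\varepsilon^{2\delta}\log(1/\varepsilon)<\infty$, so that the polynomial-entropy hypothesis of Theorem~7.16 in \cite{StCh08} applies and yields the Rademacher bound of Proposition~\ref{Rademacher}. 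The exponent $1/(3-2(1-\delta)\vartheta)$ then drops out of the fixed-point condition $r\geq 75\,\varphi_n(r)$ in Theorem~7.20 of \cite{StCh08}, with $\varphi_n(r)\asymp \lambda^{-1/4}r^{(2(1-\delta)\vartheta+1)/4}n^{-1/2}$; the factor $(1-\delta)$ multiplying $\vartheta$ comes from the fact that in Theorem~7.16 the variance envelope $\sigma^2=Vr^{\vartheta}$ enters as $\sigma^{1-\delta}$.

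Your finite-class/Bernstein route is legitimate and more elementary, but it cannot produce the $\delta$ you describe: a class of cardinality $2^{p+1}$ has log-covering number at most $(p+1)\log 2$ at \emph{every} scale, so there is no bound of the form $\varepsilon^{-2\delta(p+1)}$ to invoke, and no mechanism by which $\delta$ would appear. Your balancing arithmetic is also wrong: equating $\lambda p^2$ with $(\delta p/n)^{1/(2-\vartheta)}$ gives $p^{\,3-2\vartheta}\asymp(\lambda^{2-\vartheta}n)^{-1}$ and hence $r\asymp(\lambda n^2)^{-1/(3-2\vartheta)}$, i.e.\ the $\delta\to 0$ limit of the stated bound, with $\delta$ surviving only in constants---not in the exponent as you claim. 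Thus your argument, carried out correctly, would prove a slightly \emph{stronger} inequality than the theorem, but as written it neither reproduces nor explains the stated exponent. If your goal is to match the theorem exactly, you need the Rademacher/entropy machinery of Lemma~\ref{HrEntropyNumber} and Proposition~\ref{Rademacher}, not a union bound over labelings.
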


\subsection{Learning Rates for Best-scored Random Trees} \label{LearningRates}
Based on the established oracle inequality,
we now state our main results on the convergence rates for best-scored random trees.

\begin{theorem} \label{ConvergenceRates}
	Let	$L$ be the classification loss,
	$\alpha$ be the noise exponent as in Assumption \ref{NoiseExponent}, 
	$\vartheta = \alpha / (1 + \alpha)$,
	$\beta$ be the margin-noise exponent as in Assumption \ref{MarginNoiseExponent}, $k$ be the number of candidate trees.
	Then, for all $\tau > 0$, all $\delta \in (0, 1)$, and all $n > 1$,
	with probability $\mathrm{P} \otimes \mathrm{P}_Z$ at least $1 - 4 e^{- \tau}$, the $t$-th tree in the best-scored random forest learns with $p = n^{- \frac{4d}{c_T \beta (2 - (1-\delta)\vartheta) + 4d}}$,
    the rate
	\begin{align} \label{OOptimalLearningRate}
	\mathcal{R}_{L,\mathrm{P}}(g_{Z_t}) 
	- \mathcal{R}_{L, \mathrm{P}}^*
	\leq C  n^{-\frac{c_T\beta}{c_T\beta(2- (1 - \delta) \vartheta)+4d} },
	\end{align}
	where $c_T = 0.22$, $C$ is a constant which will be specified in the proof later,
	independent of $n$ and only depending on constants $k$, $d$, $\vartheta$, $\beta$, $\delta$, and $\tau$.
\end{theorem}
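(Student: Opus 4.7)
The plan is to apply the oracle inequality of Theorem \ref{OracleInequality} and then to control the data-free approximation term $\lambda p_{Z_t}^{*2} + \mathcal{R}_{L,\mathrm{P}}(g_{Z_t}^*) - \mathcal{R}_{L,\mathrm{P}}^*$ with high probability over the random policies $Z_t = (Z_{1,t}, \ldots, Z_{k,t})$. Since $g_{Z_t}^*$ is the regularized population minimizer over the union $\mathcal{T}_{Z_t} = \bigcup_{\ell \le k} \mathcal{T}_{Z_{\ell,t}}$, the approximation term is bounded from above by $\lambda p^2 + \mathcal{R}_{L,\mathrm{P}}(g) - \mathcal{R}_{L,\mathrm{P}}^*$ for any single $\ell$, any admissible $p$, and any $g \in \mathcal{T}_{Z_{\ell,t}, p}$. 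So it suffices to exhibit, on a favorable event, one good tree approximant in some $\mathcal{T}_{Z_{\ell,t}, p}$ and then invoke the $k$-fold minimum in \eqref{BestScoreMinimizerPopulation}.

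First, for a fixed $p \in \mathbb{N}$ and a partition $\mathcal{A}_{Z_{\ell,t},p} = \{A_1, \ldots, A_{p+1}\}$, I would take the plug-in rule $\bar g = \sum_j \sign(2 \bar\eta_j - 1)\,\boldsymbol{1}_{A_j}$, where $\bar\eta_j$ is the $\mathrm{P}_X$-average of $\eta$ on $A_j$. A standard argument shows that only cells meeting the decision boundary contribute to the excess classification risk of $\bar g$, and if the $L_1$-diameter of every cell is at most $h$, then each such cell lies inside $\{\Delta(x) \le h\}$. The margin-noise Assumption \ref{MarginNoiseExponent} then yields
\begin{align*}
\mathcal{R}_{L,\mathrm{P}}(\bar g) - \mathcal{R}_{L,\mathrm{P}}^*
\le \int_{\Delta(x) \le h} |2\eta(x) - 1|\,d\mathrm{P}_X(x)
\le c_\beta h^\beta.
\end{align*}

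Second, and this is the main obstacle, I need a probabilistic control on the maximal cell diameter produced by at least one out of $k$ independent purely random trees with $p$ splits. Because the construction in Section \ref{PRTF} uses uniform leaf selection, uniform coordinate selection, and uniform cut proportions $S_i \sim \mathrm{Unif}(0,1)$, the side of a cell along a coordinate is a product of such factors and only shrinks in distribution. My plan is to analyse the logarithms of these products via a Hoeffding/Bennett-type moment bound, combined with a union bound over the $d$ coordinates and over the $p+1$ cells, to show that for a single random tree the event $\max_j \mathrm{diam}(A_j) \le c\, p^{-c_T/(4d)}$ has probability bounded away from zero; the constant $c_T = 0.22$ reflects the typical depth growth of a purely random tree, which is slower than $\log p$ because random cuts are rarely balanced. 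Taking the best out of $k$ independent candidate policies then boosts this constant-probability event to probability at least $1 - e^{-\tau}$, yielding $\min_{\ell \le k} \max_j \mathrm{diam}(A_j) \le c\, p^{-c_T/(4d)}$ on that event.

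Finally, combining the two previous steps gives, on a set of $\mathrm{P}_Z$-probability at least $1 - e^{-\tau}$, the approximation bound $\lambda p_{Z_t}^{*2} + \mathcal{R}_{L,\mathrm{P}}(g_{Z_t}^*) - \mathcal{R}_{L,\mathrm{P}}^* \le \lambda p^2 + c_\beta c^\beta\, p^{-c_T\beta/(4d)}$ for every admissible $p$. Plugging this into Theorem \ref{OracleInequality}, optimising first in $p$ for fixed $\lambda$ (which makes the two resulting bias-type terms equal and produces an effective approximation term of order $\lambda^{c_T\beta/(c_T\beta+8d)}$) and then selecting $\lambda$ to balance against the stochastic term $(\lambda n^2)^{-1/(3-2(1-\delta)\vartheta)}$ from Theorem \ref{OracleInequality} yields exactly the tuning parameter displayed in the statement and the advertised learning rate. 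The overall probability $1 - 4e^{-\tau}$ is obtained by a final union bound combining the $1 - 3e^{-\tau}$ sample-event of Theorem \ref{OracleInequality} with the $1 - e^{-\tau}$ event controlling the geometry of the random partition, while the remaining lower-order terms $(\tau/n)^{1/(2-\vartheta)}$ and $\tau/n$ in the oracle inequality are dominated by the leading rate for the chosen $\lambda$.
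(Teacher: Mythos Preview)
Your overall architecture matches the paper's proof: combine Theorem~\ref{OracleInequality} with a high-probability approximation bound over $\mathrm{P}_Z$ (this is Proposition~\ref{ApproxError} in the paper), optimize first in $p$ and then in $\lambda$, and take a union bound for the final $1-4e^{-\tau}$. The margin-noise step and the balancing calculus at the end are correct.

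The genuine gap is in the diameter control. A Hoeffding/Bennett bound on $\sum_j \log W_j$ for a single cell is only useful once you know how many factors the product has, i.e.\ the leaf's depth, and your union bound over the $p+1$ cells cannot succeed unless every leaf has depth at least of order $c_T\log p$. That is a statement about the random \emph{tree shape} (uniform leaf selection makes the partition tree distributionally a random binary search tree), not about the cut proportions $S_i$; so your explanation that $c_T=0.22$ reflects ``random cuts are rarely balanced'' misidentifies the mechanism. The paper does not use concentration here at all: it applies Markov's inequality to $\mathbb{E}_Z[\max_A V_i(A)]$, upper-bounds this by $\mathbb{E}_{T_Z}[(1-1/(4d))^{T_Z}]$ via an exact moment-generating computation, and then invokes Devroye's saturation-level lemma (Lemma~\ref{Saturation}) to control $\mathrm{P}(T_Z < c_T\log p)$; the value $0.22$ is the numerical root of the resulting balance $1 - c_T(1+\log(2e/c_T)) > c_T/(4d)$. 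A secondary issue: boosting a \emph{fixed} constant-probability event through $k$ independent trials yields $1-(1-q)^k$, which does not reach $1-e^{-\tau}$ for arbitrary $\tau$ when $k$ is fixed. The paper lets the diameter threshold carry a factor $e^{\tau/k}$ so that the single-tree failure probability is exactly $e^{-\tau/k}$; the $k$-fold product is then $e^{-\tau}$, and this is precisely where the $\tau$- and $k$-dependence of the final constant $C$ comes from.
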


Let us briefly discuss the constants $c_T$ and $C$.
If we consider the deterministic binary tree, then for tree with number of splits $p$, the effective number of splits for each dimension is approximately $\log p$. However, we need to take randomness into consideration, which leads to the decrease of the effective number of splits written as $c_T \log p$ with $c_T = 0.22$. 
From the proof of this theorem later, we see that
the constant $C$ deceases to some constant when the number of candidate trees $k$ increases.
Moreover, note that the constant $\delta$ can be taken as small as possible.
Therefore, under Assumptions \ref{NoiseExponent} and \ref{MarginNoiseExponent},
the learning rates \eqref{OOptimalLearningRate} are close to
\begin{align*}
\mathcal{O} \Bigl( n^{-\frac{c_T\beta}{c_T\beta(2-\vartheta)+4d} } \Bigr),
\end{align*}
which is optimal only when both $\alpha$ and $\beta$ converge to infinity simultaneously.
However, as the following example shows, 
if $\eta$ belongs to
certain H\"{o}lder spaces $C^{\gamma}$, $\alpha$ and $\beta$ cannot converge to infinity simultaneously.
Note that if $\rho_X$ is the Lebesgue measure on $\mathcal{X}$ and
the Bayes boundary $\partial\Omega^*$ has nonzero $(d - 1)$-dimensional Hausdorff measure,
then \cite{Binev14} shows that the constraint $\alpha\gamma \leq 1$ must hold.


\begin{example}
	Lemma A.2 in \cite{BlSt18} shows that if $\eta$ is H\"{o}lder-continuous with exponent $\gamma \in [0,1]$, then $\Delta$ controls the noise from above with exponent $\gamma$,  i.e., there exists a constant $c > 0$ such that $|2\eta(x)-1| \leq c \ \Delta(x)^{\gamma}$ holds
	for $P_X$-almost all $x \in \mathcal{X}$.
	This together with Lemma 8.23 in \cite{StCh08} implies that $\mathrm{P}$ has margin exponent $q = \gamma \alpha$ and margin-noise exponent $\beta = \gamma(\alpha + 1)$.
	Consider the Lipschitz continuous space where
	$\gamma = 1$, then the restriction $\gamma \alpha\leq 1$ implies that $\alpha \leq 1$ and consequently
	the convergence rate becomes 
	$$
	\mathcal{O} \Bigl( n^{-\frac{c_T(\alpha + 1)}{c_T(\alpha + 1)(2-\vartheta)+4d} } \Bigr)
	\geq  \mathcal{O} \Bigl( n^{-\frac{0.11}{0.11 + d} } \Bigr),
	$$
	which is obviously slower than the minimax rate $\mathcal{O}(n^{-\frac{2}{2+d}})$.
\end{example}

Nevertheless, under the Assumptions \ref{ass::DistanceControlsNoise} and \ref{NoiseExponent},
the following theorem establishes asymptotically optimal convergence rates.

\begin{theorem} \label{the::OptimalConvergenceRates}
	Let	$L$ be the classification loss,
	$\alpha$ be the noise exponent as in Assumption \ref{NoiseExponent}, 
	$\vartheta = \alpha / (1 + \alpha)$,
	the associated distance to the decision boundary $\Delta$ defined by \eqref{DistanceSet} controls the noise by the exponent $\gamma\in [0, \infty)$ as in Assumption \ref{ass::DistanceControlsNoise}, $k$ be the number of candidate trees.
	Then, for $\delta \in (0, 1)$ and all $n > 1$,
	with probability $\mathrm{P} \otimes \mathrm{P}_Z$ at least $1 - 4 e^{- \tau}$, the $t$-th tree in the best-scored random forest  learns with $p = n^{- \frac{4d}{c_T \gamma(\alpha + 1) (2 - (1-\delta)\vartheta) + 4d}}$,
	the rate
	\begin{align} \label{OptimalLearningRateOptimal}
	\mathcal{R}_{L,\mathrm{P}}(g_{Z_t}) 
	- \mathcal{R}_{L, \mathrm{P}}^*
	\leq C  n^{-\frac{c_T\gamma(\alpha + 1)}{c_T\gamma(\alpha + 1)(2- (1 - \delta) \vartheta)+4d} },
	\end{align}
	where $c_T = 0.22$ and $C$ is a constant which will be specified in the proof later,
	independent of $n$ and only depending on constants $k$, $d$, $\vartheta$, $\delta$ and $\tau$.
\end{theorem}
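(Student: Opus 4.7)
The plan is to derive Theorem \ref{the::OptimalConvergenceRates} as a direct corollary of Theorem \ref{ConvergenceRates}. The key observation is that Assumption \ref{ass::DistanceControlsNoise} together with Assumption \ref{NoiseExponent} implies Assumption \ref{MarginNoiseExponent} with margin-noise exponent $\beta = \gamma(\alpha+1)$. Once this implication is established, substituting $\beta = \gamma(\alpha+1)$ into the learning rate \eqref{OOptimalLearningRate} and into the corresponding choice of $p$ immediately produces the claimed rate \eqref{OptimalLearningRateOptimal}.

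To establish the implication, I fix $h \geq 0$ and argue as follows. By Assumption \ref{ass::DistanceControlsNoise}, for $\mathrm{P}_X$-almost every $x$ with $\Delta(x) \leq h$ one has $|2\eta(x)-1| \leq c\,\Delta(x)^{\gamma} \leq c\,h^{\gamma}$. This yields the inclusion $\{x : \Delta(x) \leq h\} \subseteq \{x : |2\eta(x)-1| \leq c\,h^{\gamma}\}$ up to a $\mathrm{P}_X$-null set. Applying Assumption \ref{NoiseExponent} at the level $c\,h^{\gamma}$ then gives $\mathrm{P}_X(\{\Delta \leq h\}) \leq c_\alpha c^\alpha\, h^{\gamma\alpha}$, and combining this bound with the same pointwise estimate on the integrand produces
\begin{align*}
\int_{\Delta(x) \leq h} |2\eta(x) - 1|\, d\mathrm{P}_X(x)
\leq c\, h^{\gamma} \cdot \mathrm{P}_X(\{\Delta \leq h\})
\leq c^{\alpha+1} c_\alpha\, h^{\gamma(\alpha+1)},
\end{align*}
which verifies Assumption \ref{MarginNoiseExponent} with $\beta = \gamma(\alpha+1)$ and $c_\beta = c^{\alpha+1} c_\alpha$. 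This is essentially the content of Lemma 8.23 in \cite{StCh08}, already cited in the Example.

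With $\beta = \gamma(\alpha+1)$ in hand, I invoke Theorem \ref{ConvergenceRates} directly. The optimal choice $p = n^{-4d/(c_T\beta(2-(1-\delta)\vartheta)+4d)}$ becomes $p = n^{-4d/(c_T\gamma(\alpha+1)(2-(1-\delta)\vartheta)+4d)}$, and the excess risk bound reduces to exactly \eqref{OptimalLearningRateOptimal}. The probability statement $1 - 4e^{-\tau}$ and the dependence of $C$ on $k$, $d$, $\vartheta$, $\delta$, and $\tau$ transfer verbatim from Theorem \ref{ConvergenceRates}, with the absence of $\beta$ from the list of dependencies explained by the fact that the new $c_\beta$ depends only on $c$, $c_\alpha$, and $\alpha$. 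Since all the nontrivial ingredients—the oracle inequality, the approximation and estimation error analyses, and the balancing of $p$ against $\lambda$—are already encapsulated in Theorem \ref{ConvergenceRates}, there is no genuine obstacle: the only substantive content is the implication chain above, and the remainder is bookkeeping of constants.
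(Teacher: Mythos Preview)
Your proposal is correct and matches the paper's own proof essentially verbatim: the paper simply invokes Lemma 8.23 in \cite{StCh08} to obtain margin-noise exponent $\beta = \gamma(\alpha+1)$ from Assumptions \ref{NoiseExponent} and \ref{ass::DistanceControlsNoise}, and then defers to the proof of Theorem \ref{ConvergenceRates}. You additionally spell out the short argument behind that lemma, which is helpful but not a departure in approach.
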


Note that the constant $\delta$ can be taken as small as possible.
Therefore, if the noise exponent $\alpha$ (and thus $\vartheta$) is sufficiently large, 
the learning rate \eqref{OptimalLearningRateOptimal} is close to
$\mathcal{O}(n^{-1})$. In other words, we achieve asymptotically the optimal rate, see e.g. \cite{Tsybakov04,Steinwart07}. 
Moreover, if $\gamma$ or $\alpha$ is large, the rate is rather insensitive to the input dimension $d$.

\subsection{Learning Rates for Ensemble Forest} \label{sec::LearningRatesEnsemble}
Inspired by the classical random forest, we propose a diverse and thus more accurate version of random forest.  
Here, we intend to construct our best-scored random forest basing on the majority voting result of $m$ best-scored trees, each of which is generated according to the procedure in \eqref{BestScoreTree}.

Let $g_{Z_t},1 \le t \le m$, $Z_t = \{Z_{1t}, \ldots, Z_{kt}\}$ be the best-scored classification trees determined by the criterion mentioned above. As usual, we perform majority voting to make the final decision
\begin{align*}
f_Z =
\begin{cases}
1 & \text{if} \ v = v_{+} - v_{-} = \sum_{t=1}^m g_{Z_t} \ge 0
\\
-1 & \text{otherwise},
\end{cases}
\end{align*}
where 
\begin{align} \label{vPlusvMinus}
v_{+} = \sum_{t=1}^m \eins_{\{g_{Z_t} = 1\}}
\quad
\text{ and } 
\quad
v_{-} = \sum_{t=1}^m \eins_{\{g_{Z_t} = -1\}}.
\end{align}

\begin{theorem} \label{ConvergenceRateForest}
	Let	$L$ be the classification loss,
	$\alpha$ be the noise exponent as in Assumption \ref{NoiseExponent}, 
	$\vartheta = \alpha / (1 + \alpha)$,
	$\beta$ be the margin-noise exponent as in Assumption \ref{MarginNoiseExponent},
	$m$ be the number of best-scored trees in the forest and
	$k$ be the number of candidate trees. Then, for $\delta \in (0,1)$ and all $n > 1$,
	with probability $\mathrm{P} \otimes \mathrm{P}_Z$ at least $1 - 4 e^{-\tau}$, 
	learns with $p = n^{- \frac{4d}{c_T \beta (2 - (1-\delta)\vartheta) + 4d}}$,
	the rate
	\begin{align*}
	\mathcal{R}_{L,\mathrm{P}}(f_Z) 
	- \mathcal{R}_{L, \mathrm{P}}^*
	\leq C n^{-\frac{c_T\beta}{c_T\beta(2- (1 - \delta) \vartheta)+4d}},
	\end{align*}
	where $c_T = 0.22$ and $C$ is a constant which will be specified in the proof later,
	independent of $n$ and only depending on $m$, $k$, $d$, $\vartheta$, $\beta$, $\delta$ and $\tau$.
\end{theorem}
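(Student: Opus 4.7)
The plan is to reduce the forest's excess risk to an average of the tree excess risks and then invoke Theorem \ref{ConvergenceRates} together with a union bound. Using the well-known identity
\begin{align*}
\mathcal{R}_{L,\mathrm{P}}(f) - \mathcal{R}_{L,\mathrm{P}}^* = \int_{\mathcal{X}} |2\eta(x) - 1| \, \eins_{\{ f(x) \neq f^*_{L,\mathrm{P}}(x) \}} \, d\mathrm{P}_X(x),
\end{align*}
valid for any measurable $f : \mathcal{X} \to \{-1,1\}$, I would first establish the pointwise bound
\begin{align*}
\eins_{\{ f_Z(x) \neq f^*_{L,\mathrm{P}}(x) \}} \le \frac{2}{m}\sum_{t=1}^m \eins_{\{ g_{Z_t}(x) \neq f^*_{L,\mathrm{P}}(x) \}}, \qquad x \in \mathcal{X}.
\end{align*}
This is verified by distinguishing the two cases $f^*_{L,\mathrm{P}}(x) = \pm 1$ and using the definitions of $v_+$ and $v_-$ in \eqref{vPlusvMinus}: in either case, a disagreement between $f_Z$ and $f^*_{L,\mathrm{P}}$ forces at least $\lceil m/2 \rceil$ of the base trees to also disagree with $f^*_{L,\mathrm{P}}$, and the tie-breaking convention $f_Z = 1$ on $v = 0$ is compatible with this count.

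Integrating the pointwise bound against $|2\eta - 1| \, d\mathrm{P}_X$ yields the excess-risk reduction
\begin{align*}
\mathcal{R}_{L,\mathrm{P}}(f_Z) - \mathcal{R}_{L,\mathrm{P}}^* \le \frac{2}{m}\sum_{t=1}^m \bigl( \mathcal{R}_{L,\mathrm{P}}(g_{Z_t}) - \mathcal{R}_{L,\mathrm{P}}^* \bigr).
\end{align*}
Next I would apply Theorem \ref{ConvergenceRates} individually to each of the $m$ best-scored trees $g_{Z_t}$. Because the partition variables $Z_1,\ldots,Z_m$ are independent, a union bound together with the substitution $\tau \mapsto \tau + \ln(m)$ in the tree bound gives that, simultaneously for all $t = 1,\ldots,m$,
\begin{align*}
\mathcal{R}_{L,\mathrm{P}}(g_{Z_t}) - \mathcal{R}_{L,\mathrm{P}}^* \le \widetilde{C}\, n^{-\frac{c_T\beta}{c_T\beta(2-(1-\delta)\vartheta)+4d}}
\end{align*}
with $\mathrm{P}\otimes\mathrm{P}_Z$-probability at least $1-4e^{-\tau}$, where $\widetilde{C}$ inherits the dependencies of the constant in Theorem \ref{ConvergenceRates} and picks up an additional dependence on $m$ through $\tau + \ln m$. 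Plugging this into the reduction and absorbing the factor $2$ into a new constant $C$ delivers precisely the stated forest rate at the prescribed choice of $p$.

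The main steps are thus (i) the majority-voting reduction, (ii) an appropriate union bound, and (iii) invocation of Theorem \ref{ConvergenceRates}. I expect no genuine obstacle here: all the hard work---approximation-theoretic arguments for the bias term, oracle-inequality arguments for the variance term, and the probabilistic handling of the partition variables---is already encapsulated in Theorem \ref{ConvergenceRates}. The only mild subtleties are the careful case analysis needed to justify the factor-$2$ pointwise bound under the asymmetric tie-breaking convention and the bookkeeping of how the constant $C$ in the forest bound depends on $m$ via the union bound; neither affects the exponent in the rate, which explains why the ensemble inherits the single-tree rate verbatim.
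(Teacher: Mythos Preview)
Your proposal is correct and follows essentially the same route as the paper: the paper also reduces the forest excess risk to $\frac{2}{m}\sum_{t=1}^m(\mathcal{R}_{L,\mathrm{P}}(g_{Z_t}) - \mathcal{R}_{L,\mathrm{P}}^*)$ via the majority-voting/indicator bound (phrased there as a ``Markov's inequality'' step), then applies the single-tree result together with a union bound and the substitution $\tau\mapsto\tau+\log m$. The only cosmetic difference is that the paper writes the indicator bound through the chain $\eins_{\{v_+\geq m/2\}}\leq\frac{2}{m}v_+$ rather than your direct counting argument.
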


\begin{theorem} \label{the::ForestOptimalConvergenceRates}
	Let	$L$ be the classification loss,
	$\alpha$ be the noise exponent as in Assumption \ref{NoiseExponent}, 
	$\vartheta = \alpha / (1 + \alpha)$,
	the associated distance to the decision boundary $\Delta$ defined by \eqref{DistanceSet} controls the noise by the exponent $\gamma\in [0, \infty)$ as in Assumption \ref{ass::DistanceControlsNoise}, $m$ be the number of best-scored trees in the forest and $k$ be the number of candidate trees.
	Then, for $\delta \in (0, 1)$ and all $n > 1$,
	with probability $\mathrm{P} \otimes \mathrm{P}_Z$ at least $1 - 4 e^{- \tau}$, the best-scored random forest learns with $p = n^{- \frac{4d}{c_T \gamma(\alpha + 1) (2 - (1-\delta)\vartheta) + 4d}}$,
	the rate
	\begin{align} \label{OptimalLearningRate}
	\mathcal{R}_{L,\mathrm{P}}(g_{Z_t}) 
	- \mathcal{R}_{L, \mathrm{P}}^*
	\leq C  n^{-\frac{c_T\gamma(\alpha + 1)}{c_T\gamma(\alpha + 1)(2- (1 - \delta) \vartheta)+4d} },
	\end{align}
	where $c_T = 0.22$ and $C$ is a constant which will be specified in the proof later,
	independent of $n$ and only depending on constants $k$, $d$, $\vartheta$, $\delta$ and $\tau$.
\end{theorem}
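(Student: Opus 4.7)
The plan is to reduce the forest rate to the per-tree optimal rate already established in Theorem \ref{the::OptimalConvergenceRates}, via a soft majority-voting inequality combined with a union bound over the $m$ trees.

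First I would establish the deterministic comparison
\begin{align*}
\mathcal{R}_{L,\mathrm{P}}(f_Z) - \mathcal{R}_{L,\mathrm{P}}^* \leq \frac{2}{m} \sum_{t=1}^m \bigl( \mathcal{R}_{L,\mathrm{P}}(g_{Z_t}) - \mathcal{R}_{L,\mathrm{P}}^* \bigr).
\end{align*}
Pointwise, whenever $f_Z(x) \neq f_{L,\mathrm{P}}^*(x)$, the definition of $f_Z$ through $v_+$ and $v_-$ in \eqref{vPlusvMinus} forces at least $m/2$ of the trees to satisfy $g_{Z_t}(x) \neq f_{L,\mathrm{P}}^*(x)$, so $\eins_{\{f_Z(x) \neq f_{L,\mathrm{P}}^*(x)\}} \leq (2/m) \sum_t \eins_{\{g_{Z_t}(x) \neq f_{L,\mathrm{P}}^*(x)\}}$. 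Integrating this pointwise bound against $|2\eta(x) - 1| \, d\mathrm{P}_X(x)$ and using the standard identity $\mathcal{R}_{L,\mathrm{P}}(g) - \mathcal{R}_{L,\mathrm{P}}^* = \int \eins_{\{g \neq f_{L,\mathrm{P}}^*\}} |2\eta - 1| \, d\mathrm{P}_X$, valid for any $\{-1,1\}$-valued classifier, yields the displayed inequality.

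Next I would apply Theorem \ref{the::OptimalConvergenceRates} to each individual tree $g_{Z_t}$ at confidence level $\tau' := \tau + \log m$, using the very same choice $p = n^{-4d/(c_T\gamma(\alpha+1)(2-(1-\delta)\vartheta)+4d)}$ that appears in the present statement. A union bound over $t \in \{1, \ldots, m\}$ shows that all $m$ per-tree rates hold simultaneously with $\mathrm{P} \otimes \mathrm{P}_Z$-probability at least $1 - 4 m e^{-\tau'} = 1 - 4 e^{-\tau}$. Since the constant furnished by Theorem \ref{the::OptimalConvergenceRates} depends polynomially on $\tau$, the substitution $\tau \mapsto \tau + \log m$ is absorbed into a new constant that now depends additionally on $m$, while the $n$-exponent is unaffected. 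Inserting these $m$ bounds into the first step and collecting the prefactor $(2/m) \cdot m = 2$ then produces \eqref{OptimalLearningRate}.

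The genuine technical work is not in this ensemble reduction, which is soft, but rather in the per-tree Theorem \ref{the::OptimalConvergenceRates}: that is where Assumption \ref{ass::DistanceControlsNoise} is converted into an effective margin-noise exponent $\beta = \gamma(\alpha+1)$ (via Lemma 8.23 in \cite{StCh08}) and fed into the oracle inequality of Theorem \ref{OracleInequality}, whose data-free and data-dependent terms are balanced by the prescribed optimal $p$. Once that per-tree result is in hand, the passage to the best-scored forest is merely the majority-vote inequality plus a union bound, with all randomness already controlled at the single-tree level; the only mild care required is tracking how $\log m$ is swept into the final constant without perturbing the polynomial rate in $n$.
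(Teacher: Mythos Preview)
Your proposal is correct and matches the paper's approach in substance. The paper's own proof is a one-liner: it invokes Lemma 8.23 in \cite{StCh08} to convert Assumptions \ref{NoiseExponent} and \ref{ass::DistanceControlsNoise} into the margin-noise exponent $\beta = \gamma(\alpha+1)$ and then implicitly applies Theorem \ref{ConvergenceRateForest}; you instead redo the majority-vote inequality and union bound (exactly the content of the proof of Theorem \ref{ConvergenceRateForest}) and then invoke the per-tree Theorem \ref{the::OptimalConvergenceRates}, which is the same argument with the two reductions commuted.
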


Again,
if the noise exponent $\alpha$ is sufficiently large, 
we achieve asymptotically the optimal rate
for the best-scored random forest.
Moreover, if $\gamma$ or $\alpha$ is large, the rate is rather insensitive to the input dimension $d$ as well.

\subsection{Comments and Discussions}
This section presents some comments and discussions on the obtained theoretical results on the oracle inequalities, convergence rates for best-scored random trees and the learning rates for the ensemble forest.

From the theoretical perspective we notice that, 
on the one hand, rather than giving extra assumptions on the capacity of the function space, the size of function space in our algorithm is completely decided by the regularization term, while on the other hand, the approximation error is strictly calculated step by step according to the purely random scheme. 
Under certain assumptions in Section \ref{FundamentalAssumption}, when $\alpha$ go to infinity, 
we establish the asymptotically optimal learning rates \eqref{OOptimalLearningRate}
for the best-scored random trees, which is close to $\mathcal{O}(1 / n)$. 
Elementary analysis shows that asymptotically optimal rates for ensemble random forest, that is, $\mathcal{O}(1 / n)$, can be achieved.

As is already mentioned in the introduction, efforts have been paid to 
derive learning rates
for various kinds of random forests in the literature. 
Similar to our algorithm, \cite{Genuer12} and \cite{ArGe14} analyze purely random partitions independent of the data using bias-variance decomposition. 
More precisely, 
in the context of one-dimensional regression problems where
target functions are Lipschitz continuous,
\cite{Genuer12} shows that purely uniformly random trees/forests where the partitions were obtained by drawing $k$ random thresholds at random in $[0, 1]$
can both achieve minimax convergence rates $n^{-2/3}$.
Based on these models and their analysis,
\cite{ArGe14} obtains optimal convergence rates $n^{-4/5}$ over twice continuous differentiable functions in one-dimensional case for purely uniformly random forests and toy purely random partitions where the individual partitions corresponded to randomly shifts of the regular partition of $[0, 1]$ in $k$ intervals.
Note that in the latter work, boundaries and high-dimension cases are not considered. 
Concerning with high-dimension cases, under a sparsity assumption, \cite{Biau12} 
proves the convergence rate $n^{-3/(3 + 4 s \ln 2)}$
where $s$ denotes the number of active/strong variables. 
This rate is strictly faster than the commonly $d$-dimensional optimal rate $n^{-2/(2+d)}$ if $s \leq \lfloor 0.54d\rfloor$ and strictly slower than the $s$-dimensional optimal rate $n^{-2/(2 + s)}$. 
Furthermore, 
\cite{Gey14} proposes a penalized criterion and derive a risk bound inequality for the tree classifier generated by CART and thus
obtain convergence rates $\mathcal{O} ( n^{ - ( \alpha + 1 )/( \alpha + 2 ) } )$ which becomes asymptotically $\mathcal{O}(1 / n)$ if $\alpha \to \infty$.
Moreover, 
based on assumptions with respect to noise exponent $\alpha$ and smoothness of the target function $\gamma$,
\cite{Binev14} derives 
learning rates for certain recursive tree which
are of order $\mathcal{O} ( (\log n / n)^{ (1 + \alpha) \gamma/ ( (2+\alpha) \gamma + d )  } )$ when $\alpha \gamma (1-1/d) < 1$ and $0 < \gamma \leq 2$.
This rate can never better $\mathcal{O}((\log n / n)^{4/(6+d)})$. 
Last but not least, recently, 
if the target functions are Lipschitz continuous,
\cite{MoGa17} establishes the $d$-dimensional optimal rate $n^{-2/(2+d)}$
for online Mondrian Forests.

\section{Error Analysis}\label{sec::ErrorAnalysis}

In this section, we conduct error analysis by bounding the approximation error term and the sample error term respectively.

\subsection{Bounding the Approximation Error Term}

The following new result on bounding the approximation error term, 
which plays a key role in the learning theory analysis,
shows that
under certain assumptions on the amount of noise, 
the regularized approximation error
possesses a polynomial decay with respect to the 
regularization parameter $\lambda$.

\begin{proposition} \label{ApproxError}
	Let $L$ be the classification loss, $\beta$ be the margin-noise exponent as in Assumption \ref{MarginNoiseExponent} and $k$ be the number of candidate trees. Then for any fixed $\tau > 0$ and $\lambda > 0$, with probability $\mathrm{P}_Z$ at least $1-e^{-\tau}$, there holds for the $t$-the tree in the best-scored random forest that
	\begin{align*}
	\lambda p_{Z_t}^{*2} + \mathcal{R}_{L, \mathrm{P}}(g_{Z_t}^*) - \mathcal{R}_{L, \mathrm{P}}^*
	\leq c_{d, \beta} \ e^{\frac{8 d \tau \beta}{k (c_T \beta + 8 d)}} \lambda^{\frac{c_T \beta}{c_T \beta + 8 d}},
	\end{align*}
	where $c_T = 0.22$ and $c_{d, \beta}$ is a constant depending on $d$ and $\beta$.
\end{proposition}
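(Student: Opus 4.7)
The plan is to bound the regularized approximation error of $g^*_{Z_t}$ by exhibiting a good feasible competitor inside the random space $\mathcal{T}_{Z_t}=\bigcup_{\ell=1}^k \mathcal{T}_{Z_{\ell t}}$. Since $g^*_{Z_t}$ is the minimizer of $\lambda p^2 + \mathcal{R}_{L, \mathrm{P}}(g)$ over this space, the left-hand side of the claim is dominated by $\min_{\ell}\min_{p}\min_{g\in \mathcal{T}_{Z_{\ell t}, p}}\bigl\{\lambda p^2 + \mathcal{R}_{L, \mathrm{P}}(g) - \mathcal{R}_{L, \mathrm{P}}^*\bigr\}$. For each fixed $p$ and each candidate partition, I would take $g$ to be the cell-wise Bayes voter: $g \equiv \sign(2\eta-1)$ on every leaf (breaking ties arbitrarily). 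A standard computation reduces the excess classification risk of such a voter to a sum, over cells crossed by the decision boundary, of $\int_A |2\eta-1|\, d\mathrm{P}_X$; and each such cell lies entirely inside $\{x : \Delta(x)\leq h\}$ whenever every cell has diameter at most $h$. Assumption \ref{MarginNoiseExponent} then yields the clean bound $\mathcal{R}_{L, \mathrm{P}}(g) - \mathcal{R}_{L, \mathrm{P}}^* \leq c_\beta h^\beta$.

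The second step is to quantify the typical cell diameter after $p$ purely random splits. Along any root-to-leaf path, the side length in coordinate $r$ is the product $\prod_{i:\, R_i=r} S_i$, and both the depth of the path and the coordinate assignment are governed by the random-binary-tree/multinomial mechanism of Section \ref{PRTF}. Combining Chernoff/Bernstein bounds for $-\sum_i \log S_i$ (a sum of i.i.d.\ exponentials) with multinomial concentration for the per-coordinate split counts, and a union bound over the at most $p+1$ leaves, I would show that with probability at least $\pi_p$ over a single splitting policy, every cell in $\mathcal{A}_{Z_{\ell t},p}$ has diameter at most $C_d\, p^{-c_T/(4d)}$. The constant $c_T = 0.22$ arises from the gap between the depth of a balanced binary tree and the typical depth of a random one, and governs how slowly the cell lengths shrink once the randomness of the split locations is accounted for.

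By the independence of the $k$ candidate partitions, the probability that no candidate enjoys the diameter bound is at most $(1-\pi_p)^k \leq e^{-k\pi_p}$. Setting this equal to $e^{-\tau}$ forces $\pi_p \asymp \tau/k$; tracking how this tail level propagates through the concentration step inflates the diameter estimate by a multiplicative factor $e^{c\tau/k}$, and hence its $\beta$-th power by $e^{c\beta\tau/k}$. Feeding this into the voter bound and minimizing $\lambda p^2 + c_\beta C_d^{\beta}\,p^{-c_T\beta/(4d)}\,e^{c\beta\tau/k}$ in $p$ balances the two terms at $p \asymp \lambda^{-4d/(c_T\beta + 8d)}$ and produces the advertised rate $\lambda^{c_T\beta/(c_T\beta+8d)}$ together with the precise prefactor $e^{8d\tau\beta/(k(c_T\beta+8d))}$.

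The main obstacle is clearly the second step: tightly controlling the maximum cell diameter of a purely random partition, and extracting the explicit $c_T = 0.22$ constant from the simultaneous randomness in the path depth, the multinomial coordinate assignment, and the uniform split ratios. Once that concentration estimate is in place, aggregating across the $k$ independent candidates and performing the final one-dimensional optimization in $p$ are essentially bookkeeping.
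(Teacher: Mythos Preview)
Your high-level plan matches the paper's: take the cell-wise Bayes voter as competitor, reduce the excess risk to boundary-crossing cells, invoke Assumption \ref{MarginNoiseExponent} once every cell has diameter at most $h$, amplify over the $k$ independent candidates, and minimize in $p$. The final bookkeeping is right. The gap is in your diameter-control step. The tools you list---Chernoff for $-\sum_i \log S_i$, multinomial concentration for the per-coordinate split counts, and a union bound over the $p+1$ leaves---all presuppose that every root-to-leaf path already has length at least some $D$; they give nothing for the shallow leaves that the uniform leaf-selection mechanism of Section \ref{PRTF} inevitably creates. The constant $c_T = 0.22$ is not an output of those concentration steps at all: it comes from a separate bound on the \emph{saturation level} (minimum leaf depth) of a random binary search tree. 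The paper invokes Devroye's result (Lemma \ref{Saturation}), takes $k = \lfloor c_T \log p\rfloor$, and $c_T$ is determined by the requirement $1 - c_T(1+\log(2e/c_T)) > c_T/(4d)$, which numerically forces $c_T \leq 0.225\ldots$. Your proposal never names this ingredient, and without it the union bound over leaves cannot close.

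The paper's route through this step is also simpler than your concentration scheme: it never controls a tail directly. Instead it bounds the \emph{expectation} $\mathbb{E}_Z[\max_{A}\mathrm{diam}(A)]$ by writing the maximal side length in a fixed coordinate as $\prod_j \max(U_j, 1-U_j)$ along the relevant path, using $\mathbb{E}[\max(U,1-U)]=3/4$, integrating over the binomial coordinate assignment to obtain $(1-1/(4d))^{T_Z}$, and then integrating over the path length $T_Z$ via Devroye's lemma to get $\mathbb{E}_Z[\max_A V_i(A)] \leq C\,p^{-c_T/(4d)}$. A single Markov inequality converts this into the polynomial tail $\mathrm{P}_Z(\max_A \mathrm{diam}(A) > h) \leq Cd\,h^{-1} p^{-c_T/(4d)}$; setting $h = Cd\,e^{\theta} p^{-c_T/(4d)}$ with $\theta = \tau/k$ and exponentiating the single-candidate failure probability over the $k$ candidates then yields exactly the $e^{8d\tau\beta/(k(c_T\beta+8d))}$ prefactor after the minimization in $p$. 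Your Chernoff route would aim at a sharper single-tree tail, which is not needed here and still requires the saturation-level input you omit.
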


\subsection{Bounding the Sample Error Term}

In learning theory, the sample error can be bounded by means of the 
Rademacher average and Talagrand inequality. 
Let $\mathcal{F}$ be a hypothesis space, the Rademacher average is defined as follows, 
see e.g., Definition 7.18 in \cite{StCh08}:

\begin{definition}[Rademacher Average]
	Let $\varepsilon_i$, $i = 1, \ldots, n$, be a Rademacher sequence with respect to some distribution $\nu$,
	that is, a sequence of i.i.d.~random variables such that 
	$\nu(\varepsilon_i = 1) = \nu(\varepsilon_i = -1) = 1/2$.
	The $n$-th empirical Rademacher average of $\mathcal{F}$ is
	defined as
	\begin{align*}
	\mathrm{Rad}(\mathcal{F}, n)
	:= \mathbb{E}_{\nu} \sup_{f \in \mathcal{F}} 
	\biggl| \frac{1}{n} \sum_{i=1}^n \varepsilon_i f(x_i) \biggr|.
	\end{align*}
\end{definition}

Recall the function space $\mathcal{T}_p$ defined as in \eqref{Tq}.  In the following analysis, for the sake of convenience,
we need to reformulate the definition of $\mathcal{T}_p$. 
Let $p \in \mathbb{N}$ be fixed.
Let $\mathcal{\pi}$ be a partition of $\mathcal{X}$ with number of splits $p$ 
and $\mathcal{\pi}_p$ denote the family of all partitions $\mathcal{\pi}$.
Furthermore, we define
\begin{align} \label{Bp}
\mathcal{B}_p
:= \biggl\{ B : B = \bigcup_{j \in J} A_j, J \subset \{0, 1, \ldots, p\}, A_j \in \mathcal{\pi}
\in  \mathcal{\pi}_p \biggr\}.
\end{align}
Then, for all $g \in \mathcal{T}_p$, 
there exists some $B \in \mathcal{B}_p$ such that
$g$ can be written as $g = \boldsymbol{1}_B - \eins_{B^c}$.
Therefore, $\mathcal{T}_p$ can be equivalently defined as
\begin{align} \label{Tp2}
\mathcal{T}_p := \bigl\{ \eins_B - \eins_{B^c} : B \in \mathcal{B}_p \bigr\}.
\end{align}

To establish the bounds on the sample error, 
we are encouraged to give a description of the capacity of the function space.

\begin{definition}[VC dimension]
	Let $\mathcal{B}$ be a class of subsets of $\mathcal{X}$ and $A \subset \mathcal{X}$ be a finite set.
	The trace of $\mathcal{B}$ on $A$ is defined by $\{ B \cap A : B \in  \mathcal{B} \}$. 
	Its cardinality is denoted by $\Delta^{\mathcal{B}}(A)$. 
	We say that $\mathcal{B}$ shatters $A$ if $\Delta^{\mathcal{B}}(A) = 2^{\#(A)}$,
	that is, if for every $\tilde{A} \subset A$, there exists a $B \subset \mathcal{B}$ such that
	$\tilde{A} = B \cap A$. For $k \in \mathbb{N}$, let 
	$$
	m^{\mathcal{B}}(k)  := \sup_{A \subset \mathcal{X}, \, \#(A) = k}  \Delta^{\mathcal{B}}(A).
	$$
	Then, the set $\mathcal{B}$ is a Vapnik-Cervonenkis (VC) class if there exists $k < \infty$ such that $m^{\mathcal{B}}(k) < 2^k$ and the minimal of such $k$ is called the \emph{VC dimension} of $\mathcal{B}$, 
	and abbreviated as $\mathrm{VC}(\mathcal{B})$.
\end{definition}

\begin{lemma} \label{VCdimension}
	The VC dimension of $\mathcal{B}_p$ in \eqref{Bp} can be upper bounded by $d p + 2$. 
\end{lemma}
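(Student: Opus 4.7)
The plan is to bound the growth function of $\mathcal{B}_p$ on any $n$-point set and then invert this bound via Sauer's lemma to obtain the VC dimension estimate. The key structural observation is that any $B \in \mathcal{B}_p$ is fully described by three pieces of data: (i) a rooted binary tree $T$ with $p$ internal nodes; (ii) an axis-aligned split at each internal node, specified by a coordinate index $j \in \{1,\ldots,d\}$ and a threshold $t \in \mathbb{R}$; and (iii) a subset $J$ of the $p+1$ leaves marking which cells comprise $B$.

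Fix a set $A \subset \mathcal{X}$ with $|A| = n$. Once the tree shape and the coordinate at each internal node are fixed, each threshold influences the trace $B \cap A$ only through its position relative to the projections of the $n$ points onto the chosen coordinate, so it has at most $n+1$ effective values. Taking the product over the (at most $4^p$) binary tree shapes, the $d^p$ coordinate assignments, the $(n+1)^p$ effective threshold choices per configuration, and the $2^{p+1}$ leaf labelings yields an upper bound on $|\{B \cap A : B \in \mathcal{B}_p\}|$ that is polynomial in $n$.

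I would then require this polynomial bound to be strictly below $2^n$ and solve for $n$; the contrapositive of Sauer's lemma forces $\mathrm{VC}(\mathcal{B}_p) \le dp + 2$. A clean base case is $p = 0$, where $\mathcal{B}_0 = \{\emptyset, \mathcal{X}\}$ has VC dimension $1 \le 2$, matching the bound, and I would then reinforce the result inductively in $p$ by arguing that appending one further axis-aligned split to an existing $(p-1)$-split partition can allow at most $d$ additional points to be shattered, reflecting the $d$-fold choice of splitting coordinate.

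The main obstacle is the inversion in the third step: the blunt product bound gives a growth function of order $(d(n+1))^p \cdot 2^{p+1}$, whose straightforward inversion yields a VC bound of order $p\log(dn)$ rather than the linear-in-$dp$ expression asserted. The induction therefore does real work --- it must show directly that each new split contributes only an additive $d$ to the VC dimension (absorbing all tree-shape, threshold, and leaf-labeling overhead into the additive constant $2$), which in turn relies on the fact that many different tree-shape, coordinate, threshold, and label configurations realize the same trace on $A$ and so the product bound is highly redundant.
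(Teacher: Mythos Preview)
Your proposal has a genuine gap that you yourself identify but do not close. The growth-function count
\[
\bigl|\{B \cap A : B \in \mathcal{B}_p\}\bigr| \;\le\; 4^p \cdot d^p \cdot (n+1)^p \cdot 2^{p+1}
\]
is correct, but inverting it against $2^n$ only yields $\mathrm{VC}(\mathcal{B}_p) = O(p\log(dp))$, not $dp+2$. You then appeal to an inductive step --- ``appending one axis-aligned split can allow at most $d$ additional points to be shattered'' --- but this is asserted, not proved, and it is not a consequence of any standard VC-dimension composition lemma. In general, when one enlarges a set system by a further Boolean or geometric operation, the VC dimension can grow by more than the ``number of new parameters''; the usual bounds (e.g.\ for unions or intersections of two VC classes) carry multiplicative or logarithmic overhead, not a clean additive $d$. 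So the induction is where all the difficulty lies, and your sketch gives no mechanism for carrying it out. Without that, the argument does not reach $dp+2$.

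The paper takes an entirely different route: rather than counting traces, it argues geometrically that for every $p$ one can place $dp+2$ points in $\mathcal{X}$ --- namely $p$ parallel axis-orthogonal layers of $d$ points each, with class labels alternating between layers, together with two further points on the outside --- such that the resulting dichotomy cannot be realized by any union of cells from a $p$-split axis-parallel partition. The extremal configuration for $p=1$ is $d$ points on a hyperplane flanked by two points of the opposite class, and the general case is built by stacking such layers. This direct construction sidesteps the growth-function overcount altogether and is what produces the sharp additive constant $2$.
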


\begin{definition}[Covering Number]
	Let $(\mathcal{X}, d)$ be a metric space, $A \subset \mathcal{X}$ and $\varepsilon > 0$.
	We call $\tilde{A} \subset A$ an $\varepsilon$-net of $A$ 
	if for all $x \in A$ there exists an $\tilde{x} \in \tilde{A}$ such that $d(x, \tilde{x}) \leq \varepsilon$.
	Moreover.
	the $\varepsilon$-covering number of $A$ is defined as
	\begin{align*}
	\mathcal{N} (A, d, \varepsilon) 
	= \inf \biggl\{ n \geq 1 : \exists x_1, \ldots, x_n \in \mathcal{X} \text{ such that } A \subset \bigcup_{i=1}^n B_d (x_i, \varepsilon) \biggr\},
	\end{align*}
	where $B_d(x, \varepsilon)$ denotes the closed ball in $\mathcal{X}$ centered at $x$ with radius $\varepsilon$.
\end{definition}
Let $\mathcal{B}$ be a class of subsets of $\mathcal{X}$, 
denote $\eins_{\mathcal{B}}$ as the collection of the indicator functions of all  $B \in \mathcal{B}$, that is, $\eins_{\mathcal{B}} := \{ \eins_B : B \in \mathcal{B} \}$.
Moreover, as usual, for any probability measure $Q$,
$L_2(Q)$ is denoted as the $L_2$ space with respect to $Q$ 
equipped with the norm $\|\cdot\|_{L_2(Q)}$.

\begin{lemma} \label{BpTpCoveringNumbers}
	Let $\mathcal{B}_p$ and $\mathcal{T}_p$ be defined as in \eqref{Bp} and \eqref{Tp2} respectively. Then, for all
	$0 < \varepsilon < 1$, there exists a universal constant $K$
	such that
	\begin{align} \label{BpCoveringNumber}
	\mathcal{N} \big( \eins_{\mathcal{B}_p}, \|\cdot\|_{L_2(Q)}, \varepsilon \big) 
	\le  K (d p + 2) (4 e)^{d p + 2} (1 / \varepsilon)^{2 (d p + 1)}
	\end{align}
	and
	\begin{align} \label{TpCoveringNumber}
	\mathcal{N} \bigl( \mathcal{T}_p, \|\cdot\|_{L_2(Q)}, \varepsilon \bigr) 
	\leq K (d p + 2) (4 e)^{d p + 2}(2 / \varepsilon)^{2 (d p + 1)}
	\end{align}
	hold for any probability measure $Q$.
\end{lemma}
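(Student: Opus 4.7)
The plan is to reduce both covering-number estimates to one application of a classical VC-type covering bound, using Lemma \ref{VCdimension} to control the VC dimension of $\mathcal{B}_p$ and then relating $\mathcal{T}_p$ to $\mathbf{1}_{\mathcal{B}_p}$ through the explicit representation in \eqref{Tp2}.

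First I would invoke Lemma \ref{VCdimension}, which asserts $\mathrm{VC}(\mathcal{B}_p) \leq dp + 2$. The core ingredient is then the standard result (see, e.g., Theorem 2.6.4 in van der Vaart and Wellner (1996), or Theorem 9.2 in Kosorok (2008)) stating that for any VC class $\mathcal{C}$ with $\mathrm{VC}(\mathcal{C}) = V$, there exists a universal constant $K$ such that, for every probability measure $Q$ and every $0 < \varepsilon < 1$,
\begin{align*}
\mathcal{N}\bigl( \eins_{\mathcal{C}}, \|\cdot\|_{L_2(Q)}, \varepsilon \bigr)
\le K V (4e)^V (1/\varepsilon)^{2(V-1)}.
\end{align*}
Plugging $V = dp+2$ into this bound yields \eqref{BpCoveringNumber} directly.

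For the second estimate, I would exploit the equivalent description of $\mathcal{T}_p$ in \eqref{Tp2}: every $g \in \mathcal{T}_p$ is of the form $g = \eins_B - \eins_{B^c} = 2 \eins_B - 1$ for some $B \in \mathcal{B}_p$. Consequently, for any $B_1, B_2 \in \mathcal{B}_p$ the corresponding tree rules $g_1, g_2 \in \mathcal{T}_p$ satisfy
\begin{align*}
\| g_1 - g_2 \|_{L_2(Q)}
= 2 \, \| \eins_{B_1} - \eins_{B_2} \|_{L_2(Q)},
\end{align*}
so that every $(\varepsilon/2)$-net of $\eins_{\mathcal{B}_p}$ with respect to $\|\cdot\|_{L_2(Q)}$ induces an $\varepsilon$-net of $\mathcal{T}_p$ of the same cardinality. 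Hence
\begin{align*}
\mathcal{N}\bigl( \mathcal{T}_p, \|\cdot\|_{L_2(Q)}, \varepsilon \bigr)
\le \mathcal{N}\bigl( \eins_{\mathcal{B}_p}, \|\cdot\|_{L_2(Q)}, \varepsilon / 2 \bigr),
\end{align*}
and substituting the bound obtained in the first step gives \eqref{TpCoveringNumber}.

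The main obstacle is purely bookkeeping: matching the constants and the exponent $2(dp+1)$ to the form produced by the VC covering theorem, which uses $2(V-1) = 2(dp+1)$, and tracking the factor $2$ that arises from passing between $\eins_B$ and $\eins_B - \eins_{B^c}$. No genuinely new probabilistic or combinatorial argument is needed beyond Lemma \ref{VCdimension} and the cited covering-number theorem.
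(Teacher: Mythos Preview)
Your proposal is correct and follows essentially the same route as the paper: the paper also derives \eqref{BpCoveringNumber} directly from Lemma~\ref{VCdimension} together with Theorem~9.2 in \cite{Kosorok08}, and obtains \eqref{TpCoveringNumber} by writing $g = 2\eins_B - 1$, observing $\|g - g_j\|_{L_2(Q)} = 2\|\eins_B - \eins_{B_j}\|_{L_2(Q)}$, and concluding $\mathcal{N}(\mathcal{T}_p,\|\cdot\|_{L_2(Q)},\varepsilon) \le \mathcal{N}(\eins_{\mathcal{B}_p},\|\cdot\|_{L_2(Q)},\varepsilon/2)$.
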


\begin{definition}[Entropy Number]
	Let $(\mathcal{X}, d)$ be a metric space, $A \subset \mathcal{X}$ and $n \geq 1$ be an integer.
	The $n$-th entropy number of $(A, d)$ is defined as
	\begin{align*}
	e_n(A, d) = \inf \biggl\{ \varepsilon > 0 : \exists x_1, \ldots, x_{2^{n-1}} \in \mathcal{X} \text{ such that } A \subset \bigcup_{i=1}^{2^{n-1}} B_d(x_i, \varepsilon) \biggr\}.
	\end{align*}
\end{definition}
Denote
\begin{align} \label{rstar}
r^* : =  \inf_{g \in \mathcal{T}_{Z_t}} \ \lambda p^2(g) + \mathcal{R}_{L, \mathrm{P}} (g) - \mathcal{R}_{L, \mathrm{P}}^*.
\end{align}
For $r > r^*$, denote
\begin{align} 
&\mathcal{G}_r : = \{g \in \mathcal{T}_{Z_t} : \lambda p^2(g) + \mathcal{R}_{L, \mathrm{P}} (g) - \mathcal{R}_{L, \mathrm{P}}^* \le r\},
\nonumber
\\
&\mathcal{H}_r : = \{L \circ g - L \circ f_{L, \mathrm{P}}^* : g \in \mathcal{G}_r\},
\label{Hr}
\end{align}
where $L \circ g$ denotes the classification loss of $g$,
that is, $L \circ g(x, y) := L(x, y, g(x))$.

\begin{lemma} \label{HrEntropyNumber}
	Let $\mathcal{H}_r$ be defined as in \eqref{Hr}.
	Then, for all $\delta\in(0,1)$,
	the $i$-th entropy number of $\mathcal{H}_r$ satisfies
	\begin{align*}
	\mathbb{E}_{D \sim \mathrm{P}^n} \
	e_i \bigl( \mathcal{H}_r, \|\cdot\|_{L_2(D)} \bigr) 
	\leq c_{d,\delta} \ r^{1/(4\delta)} \lambda^{- 1/(4\delta)} i^{-1/(2\delta)},
	\end{align*}
	where $c_{d,\delta} := ((18 d) / (e \delta))^{1/2 \delta}$ is a constant depending on $d$ and $\delta$.
\end{lemma}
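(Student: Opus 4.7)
The plan is to bound the $i$-th entropy number of $\mathcal{H}_r$ in three stages: reduce the problem to $\mathcal{G}_r$ using the structure of the classification loss, invoke Lemma \ref{BpTpCoveringNumbers} to control the $L_2(D)$-covering numbers of $\mathcal{G}_r$, and then convert the resulting logarithmic covering bound into the polynomial-decay entropy bound claimed by the lemma.

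First I would pass from $\mathcal{H}_r$ to $\mathcal{G}_r$. Since $L$ is the classification loss and every $g \in \mathcal{T}_{Z_t}$ takes values in $\{-1,1\}$, the pointwise identity $|L \circ g(x,y) - L \circ g'(x,y)| = \boldsymbol{1}_{\{g(x) \neq g'(x)\}} = \tfrac{1}{2}|g(x) - g'(x)|$ holds, so that $\|(L\circ g - L\circ f^*_{L,\mathrm{P}}) - (L\circ g' - L\circ f^*_{L,\mathrm{P}})\|_{L_2(D)} = \tfrac{1}{2}\|g - g'\|_{L_2(D)}$. Consequently $e_i(\mathcal{H}_r, \|\cdot\|_{L_2(D)}) = \tfrac{1}{2}\,e_i(\mathcal{G}_r, \|\cdot\|_{L_2(D)})$, which reduces everything to bounding the entropy numbers of $\mathcal{G}_r$ in expectation over $D$.

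Next I would use the defining inequality of $\mathcal{G}_r$ to observe that $\lambda p^2(g) \leq r$ for every $g \in \mathcal{G}_r$, hence $p(g) \leq \sqrt{r/\lambda}$ and $\mathcal{G}_r \subset \mathcal{T}_{p_r}$ with $p_r := \lfloor \sqrt{r/\lambda}\rfloor$. Lemma \ref{BpTpCoveringNumbers} then yields, for an arbitrary probability measure $Q$,
\[
\log \mathcal{N}(\mathcal{G}_r, \|\cdot\|_{L_2(Q)}, \varepsilon) \leq \log\bigl(K(dp_r+2)\bigr) + (dp_r+2)\log(4e) + 2(dp_r+1)\log(2/\varepsilon),
\]
which I would consolidate into a single bound of the form $c_1\,dp_r\,\log(1/\varepsilon)$ for an explicit numerical constant $c_1$ by absorbing the additive terms into the $\log(2/\varepsilon)$ piece (valid for $\varepsilon$ sufficiently small). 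Because this estimate is independent of $Q$, taking $Q = D$ and then expectation over $D \sim \mathrm{P}^n$ costs nothing.

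The third and decisive step relies on the elementary inequality $\log x \leq x^{2\delta}/(2\delta e)$ for all $x, \delta > 0$, which one obtains by minimizing $x^{2\delta}/(2\delta e) - \log x$ at $x = e^{1/(2\delta)}$. Applying it with $x = 1/\varepsilon$ and substituting $p_r \leq \sqrt{r/\lambda}$ trades the logarithmic dependence on $1/\varepsilon$ for a power, producing $\log \mathcal{N}(\mathcal{G}_r, \|\cdot\|_{L_2(Q)}, \varepsilon) \leq C_\delta\,d\sqrt{r/\lambda}\,\varepsilon^{-2\delta}$ with $C_\delta$ of order $1/\delta$. Inverting via the standard equivalence $e_i \leq \varepsilon$ whenever $\log \mathcal{N}(\cdot, \varepsilon) \leq (i-1)\log 2$ then yields $e_i(\mathcal{G}_r, \|\cdot\|_{L_2(Q)}) \leq \bigl(C_\delta d/((i-1)\log 2)\bigr)^{1/(2\delta)}\,r^{1/(4\delta)}\,\lambda^{-1/(4\delta)}$, which combined with the factor $1/2$ from Step 1 delivers the shape $c_{d,\delta}\,r^{1/(4\delta)}\lambda^{-1/(4\delta)}i^{-1/(2\delta)}$. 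The hard part will be the constant tracking to land exactly on $c_{d,\delta} = ((18d)/(e\delta))^{1/(2\delta)}$: this requires careful absorption of the additive $\log(4e)$ and $\log K$ contributions into the $\varepsilon^{-2\delta}$ term and careful handling of the gap between $\log(2/\varepsilon)$ and $\log(1/\varepsilon)$, whereas the dependence on $r$, $\lambda$ and $i$ is essentially forced by the three-step scheme.
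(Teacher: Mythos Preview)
Your proposal is correct and follows essentially the same route as the paper: reduce $\mathcal{H}_r$ to $\mathcal{G}_r$ via the $\tfrac12$-Lipschitz relation for the classification loss, embed $\mathcal{G}_r\subset\mathcal{T}_q$ with $q=\lfloor (r/\lambda)^{1/2}\rfloor$ and invoke Lemma~\ref{BpTpCoveringNumbers}, then use $\log(1/\varepsilon)\le \varepsilon^{-2\delta}/(2e\delta)$ to turn the logarithmic covering bound into a polynomial one and invert to entropy numbers. The only cosmetic difference is that the paper cites Exercise~6.8 in \cite{StCh08} for the final inversion (which supplies the extra factor~$3$ turning $6d$ into $18d$), whereas you invert by hand; this is where your anticipated constant tracking happens.
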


Having established the entropy number of the function set $\mathcal{H}_r$,
its Rademacher average can be bounded as follows:

\begin{proposition} \label{Rademacher}
	Let $\mathcal{H}_r$ be defined as in \eqref{Hr}. 
	Then for any $\delta \in (0,1)$, we have
	\begin{align*}
	\mathbb{E}_{D \sim \mathrm{P}^n} \ & \mathrm{Rad}_D (\mathcal{H}_r, n) 
    \leq \max \Bigl\{ c_{\delta,1} c_{d, \delta}^\delta V^{\frac{1-\delta}{2}}   
	\lambda^{-\frac{1}{4}} r^{\frac{2\vartheta-2\vartheta\delta+1}{4}} n^{-\frac{1}{2}},
	c_{\delta,2} c_{d, \delta}^{\frac{2\delta}{1+\delta}} \lambda^{-\frac{1}{2+2\delta}} 
	r^{\frac{1}{2+2\delta}} n^{-\frac{1}{1+\delta}} \Bigr\},
	\end{align*}
	where $c_{\delta,1}, c_{\delta,2} > 0$ are constants depending only on $\delta$, $c_{d,\delta}$ is the constant as in Lemma \ref{HrEntropyNumber}, $V$ is a constant depending on $\alpha$
	and $\vartheta = \alpha / (1 + \alpha)$ .
\end{proposition}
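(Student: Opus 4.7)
The plan is to combine three ingredients: the entropy number bound from Lemma \ref{HrEntropyNumber}, a variance bound for the excess classification loss stemming from the noise exponent in Assumption \ref{NoiseExponent}, and a standard result that converts entropy decay plus variance control into a Rademacher average bound (concretely, Theorem 7.16 in \cite{StCh08}).

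First I would record the variance bound. Under Assumption \ref{NoiseExponent} with $\vartheta = \alpha / (1 + \alpha)$, the classification loss satisfies the classical variance inequality
\begin{align*}
\mathbb{E}_{\mathrm{P}} \bigl( L \circ g - L \circ f^*_{L, \mathrm{P}} \bigr)^2
\leq V \bigl( \mathcal{R}_{L, \mathrm{P}}(g) - \mathcal{R}_{L, \mathrm{P}}^* \bigr)^{\vartheta}
\end{align*}
with $V$ depending only on $\alpha$ (this is the self-calibration/variance bound under Tsybakov's noise condition, e.g.\ Theorem 8.24 of \cite{StCh08}). Since every $h \in \mathcal{H}_r$ is of the form $h = L \circ g - L \circ f^*_{L, \mathrm{P}}$ with $g \in \mathcal{G}_r$, and $\mathcal{G}_r$ forces $\mathcal{R}_{L, \mathrm{P}}(g) - \mathcal{R}_{L, \mathrm{P}}^* \le r$, I obtain simultaneously the uniform variance bound $\mathbb{E}_{\mathrm{P}} h^2 \le V r^{\vartheta}$ and the uniform supremum bound $\|h\|_{\infty} \le 1$.

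Second, I would feed these into a peeling/chaining Rademacher inequality. Specifically, Lemma \ref{HrEntropyNumber} gives, in expectation over $D$,
\begin{align*}
e_i\bigl(\mathcal{H}_r, \|\cdot\|_{L_2(D)}\bigr)
\leq a\, i^{-1/(2\delta)},
\qquad a := c_{d,\delta}\, r^{1/(4\delta)}\lambda^{-1/(4\delta)},
\end{align*}
which is exactly the hypothesis of Theorem 7.16 in \cite{StCh08} with parameter $p = \delta$. That theorem then yields
\begin{align*}
\mathbb{E}_{D}\,\mathrm{Rad}_D(\mathcal{H}_r, n)
\le \max\Bigl\{\, c_{\delta,1}\, a^{\delta}\,\sigma^{1-\delta}\, n^{-1/2},\;
c_{\delta,2}\, a^{2\delta/(1+\delta)}\, B^{(1-\delta)/(1+\delta)}\, n^{-1/(1+\delta)} \Bigr\}
\end{align*}
with $\sigma^2 = V r^{\vartheta}$ and $B = 1$. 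Substituting $a$, $\sigma$, $B$ and simplifying the exponents gives the two alternatives in the claim; the exponent of $r$ in the first term comes out to $\tfrac{1}{4} + \tfrac{\vartheta(1-\delta)}{2} = \tfrac{2\vartheta - 2\vartheta\delta + 1}{4}$, matching the stated bound exactly.

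The main obstacle I anticipate is not the arithmetic but verifying the hypotheses of the Rademacher-from-entropy theorem cleanly: one must ensure the entropy bound holds in expectation with the correct form for all scales $i \ge 1$, and that the variance calibration is valid uniformly over $\mathcal{H}_r$ (not only pointwise for a fixed $g$). Both are handled by taking $\mathcal{G}_r$ as our sub-level set and invoking the expected entropy bound of Lemma \ref{HrEntropyNumber} — once these are in place, the two max-terms arise mechanically from balancing the Dudley-type variance term against the supremum-driven term inside the chaining inequality.
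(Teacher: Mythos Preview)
Your proposal is correct and follows essentially the same route as the paper: establish the variance bound $\mathbb{E}_{\mathrm{P}} h^2 \le V r^{\vartheta}$ and the sup bound $\|h\|_\infty \le 1$, read off the entropy decay $e_i \le a\, i^{-1/(2\delta)}$ with $a = c_{d,\delta}(r/\lambda)^{1/(4\delta)}$ from Lemma~\ref{HrEntropyNumber}, and plug everything into Theorem~7.16 of \cite{StCh08}. The only small point you should add explicitly is the verification that $a \ge B = 1$, which Theorem~7.16 requires as a hypothesis; the paper notes this, and it is immediate once one observes that the relevant regime is $r \ge r^* \ge \lambda p^{*2}$ (so $r/\lambda \ge 1$ whenever the minimizer has at least one split) or, more simply, that the theorem is applied only for $r$ in the range where the resulting bound is nontrivial.
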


\section{A Counterexample} \label{sec::Counterexample}
Here, we present a counterexample to illustrate that in order to ensure the consistency of the forest, all dimensions must have the chance to be split. In other words, if the splits only occur on some predefined dimensions, this may leads to the inconsistency of the forest. 
We mention that the following counterexample is merely a simple case where we only consider a predefined dimension of size one.

First of all, we construct a special sample distribution which can be described by the following feature vector
\begin{align*}
x_i : = \sum_{j = 1}^d a_{i j} e_j, \quad a_{i j} \in \{ -1, 1 \}, \quad i = 1, \ldots, 2^d,
\end{align*}
where $e_j,\ j=1, \ldots, d$ are unit vectors, and $x_i,\ i=1,\ldots, 2^d$ can thus be viewed as feature points located at the vertexes of a $d$-dimension cube on which all probability mass is assumed to be concentrate. Points are then labeled as follows:
\begin{align*}
y_i : = 
\begin{cases}
1 & \mathrm{if} \ \sum_{j = 1}^d  \eins_{\{a_{i j} = 1\}} \ \mathrm{is\ even}, \\
-1 & \mathrm{otherwise}.
\end{cases}
\end{align*}
It can be seen that we have the Bayes risk $\mathcal{R}^* = 0$ in this case.
Samples are chosen in the form of pair $(x_i, y_i)$ uniformly at random with replacement from the above mentioned sample distribution and therefore form a data set $D$ of size $n$.

Secondly, we scrutinize results of performing splits only from one dimension to form a decision tree. 
If we project all data to that dimension, say the $j$th dimension,
from which we will perform the splits,
then the classification rule will be based on the projection data.
Therefore, in this case, no matter where the chosen cut point is, the classification result of any point 
$x = \sum_{j = 1}^d a_j e_j$ can be obtained by
\begin{align*}
\hat{f}_{D_j} (x) = 
\begin{cases}
1 & \sum_{k = 1}^n y_k \eins_{\{a_{k j} = a_j\}} > 0, \\
-1 & \mathrm{otherwise},
\end{cases}
\end{align*}
which is named as the classifier of the $j$th dimension.
Without loss of generality, we assume that 
samples located on different sides of the splits are classified into different classes, i.e.~
$\hat{f}_{D_j} (x) \hat{f}_{D_j} (x') = a_j a'_j$. 
For other cases, the analysis can be conducted in the same way.

The above process provides us a way to construct one decision tree and therefore we are able to develop a forest with each tree built in that way. The constructed $m$-tree forest consists of classifiers of different dimensions where the number of classifiers from the same dimension, say the $j$ dimension, is denoted as $m_j$, so that we have $m = \sum_{j=1}^d m_j$.
As a result, the forest classifier is still the majority vote of trees and presented as
\begin{align*}
\hat{f}_D (x) = \sign \Big(\sum_{j = 1}^m m_j \hat{f}_{D_j} \Big).
\end{align*}
However, we illustrate that the above forest classifier cannot be consistent.

On one hand, if $d$ is even, 
for any feature point $x = \sum_{j = 1}^d a_j e_j$, 
we assume that there exists an even number $k$ such that
$x$ can be presented by
\begin{align*}
x = \sum_{j = 1}^k e_{(j)} - \sum_{j = k + 1}^d e_{(j)},
\end{align*} 
where $\{ e_{(1)}, \ldots, e_{(d)} \}$ is a rearrangement of $\{ e_1, \ldots, e_d \}$ so that the first $k$ elements in $\{ e_{(1)}, \ldots, e_{(d)} \}$ corresponding to the elements in $\{ e_1, \ldots, e_d \}$ who have $a_j = 1$.
As a result, the true label of $x$ is $y=1$.
Since
$d - k$ is also an even number, 
we can find a corresponding point
\begin{align*}
x' = \sum_{j = k + 1}^{d} e_{(j)} - \sum_{j = 1}^k e_{(j)}.
\end{align*} 
whose true label is also $Y' = 1$. 
Regardless of the classification from any one dimension, $x$ and $x'$ will always be assigned to different categories. In other word, whenever $x$ is labeled $1$ based on projection data $D_j$, $x'$ will be labeled as $-1$, so the final prediction result will always be different, which leads to a misclassification ratio of  $50\%$.  
The analysis is the same for cases where the true label of $x$ and $x'$ is $-1$.

On the other hand, 
if $d$ is odd, for any feature point $x = \sum_{j = 1}^d a_j e_j$, 
we assume that there exists a dimension, say the $j$th dimension, whose coefficient is $a_j = 1$ and an odd number $k$ such 
\begin{align*}
x = e_j + \sum_{i = 1}^k e_{(i)} - \sum_{i = k + 1}^{d - 1} e_{(i)}
\end{align*}
where $\{ e_{(1)}, \ldots, e_{(d-1)} \}$ is a rearrangement of $\{ e_1, \ldots, e_{j-1}, e_{j+1}, \ldots, e_d \}$ similar as above. Therefore the true label of $x$ is $y = 1$. Then we can also find a corresponding point
\begin{align*}
x' = e_j + \sum_{i = k + 1}^{d - 1} e_{(i)} - \sum_{i = 1}^k e_{(i)},
\end{align*}
whose true label is also $y' = 1$ for
$d - 1 -k$ is also an odd number. 
And for an even number $k'$, we can find two more points which are 
\begin{align*}
x'' = e_j + \sum_{i = 1}^{k'} e_{(i)} - \sum_{i = k' + 1}^{d - 1} e_{(i)}
\end{align*}
whose true label is $Y'' = -1$ and
\begin{align*}
x''' = e_j + \sum_{i = k' + 1}^{d - 1} e_{(i)} - \sum_{i = 1}^{k'} e_{(i)},
\end{align*}
whose true label is also $Y'''= -1$.
When $x$ and $x'$ are correctly classified, $x''$ and $x'''$ must be misclassified since if not, a contradiction will happen on the decision made on the $j$th dimension, which also means that the misclassification ratio of the forest method is $50\%$.

As a result, the above forest can not be consistent in this case. Moreover, this example can also be extended to the cases where data is vertically projected to the space of dimension $d_0 \le d - 1$.

\section{Experimental Performance}\label{sec::experiments}
In this section, we discuss the model selection problem of the best-scored random forest by performing numerical experiments and give comparisons between our algorithm and other classification methods.

\subsection{Construction of Forests}
When it comes to model selection of random forest,
the partition of the data is what we should pay special attention to.
To this end, for our best-scored random forest, there is a need to first pick up appropriate values for the number of candidates which is $k$ when choosing each decision tree in the forest, and $m$ for the total number of trees in the forest according to data sets at hand (\cite{Probst18} gives some advice on the choice of number of trees in random forest). 
Moreover, 
we generate an appropriate number of splits $p$
for the construction of decision trees according to the data set.
We mention here that in our experiments, all data sets are divided into two parts which are training data and testing data. These two account for $70\%$  and $30\%$ of the overall data volume, respectively. 

There is one vital thing to emphasize here.
Concerning that purely random tree may face the dilemma where the effective number of splits is relatively small, we propose a new effective partition method called the adaptive random partition which improves the original purely random partition procedures. For the original partition, $L_i$ in the random vector $Q_i := (L_i, R_i, S_i)$ (please refer to Section \ref{PRTF}) denotes the randomly chosen leaf to be split at the $i$-th step of tree construction. However, since this choice of $L_i$ does not make any use of the sample information, it may suffer from over-splitting on sample-sparse nodes and under-splitting on sample-dense nodes. A wise improvement need to use sample information to some extent without losing the randomness of the choice of nodes. Therefore, we propose that when choosing a to be split node, we first randomly select one sample point from the training data set, and then choose the node which that sample point belongs to as $L_i$. According to the fact that when randomly picking sample point from the whole training data set, 
nodes with more samples will be more likely to be selected, we obtain a partition where there are more splits applied to sample-dense area, while fewer splits applied to sample-sparse area. In this way, we develop an adaptive random partition.

\begin{figure*}[htbp]
	\centering
	\begin{minipage}[b]{0.32\textwidth}
		\centering
		\includegraphics[width=\textwidth]{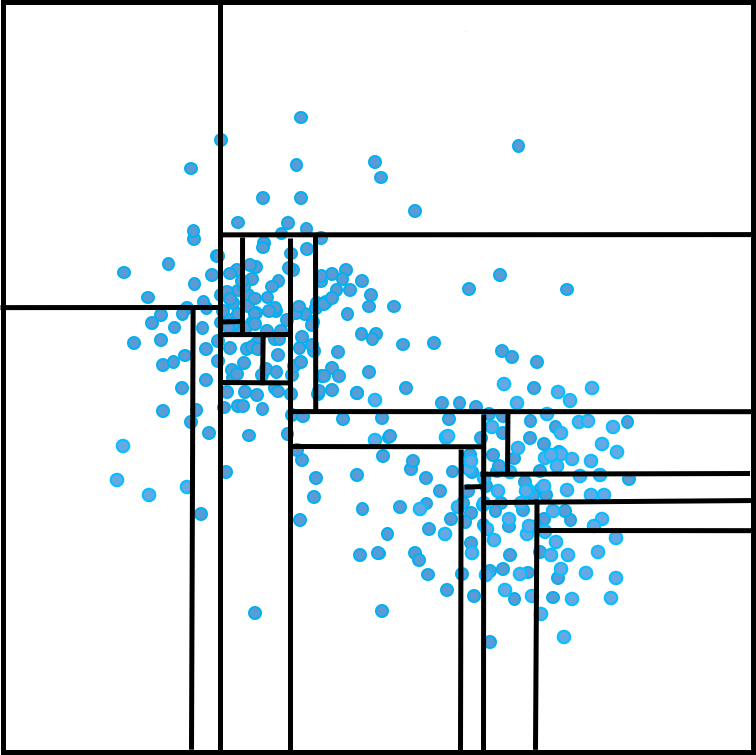}
		Adaptive Method
		\centering
	\end{minipage}
	\qquad
	\begin{minipage}[b]{0.32\textwidth}
		\centering
		\includegraphics[width=\textwidth]{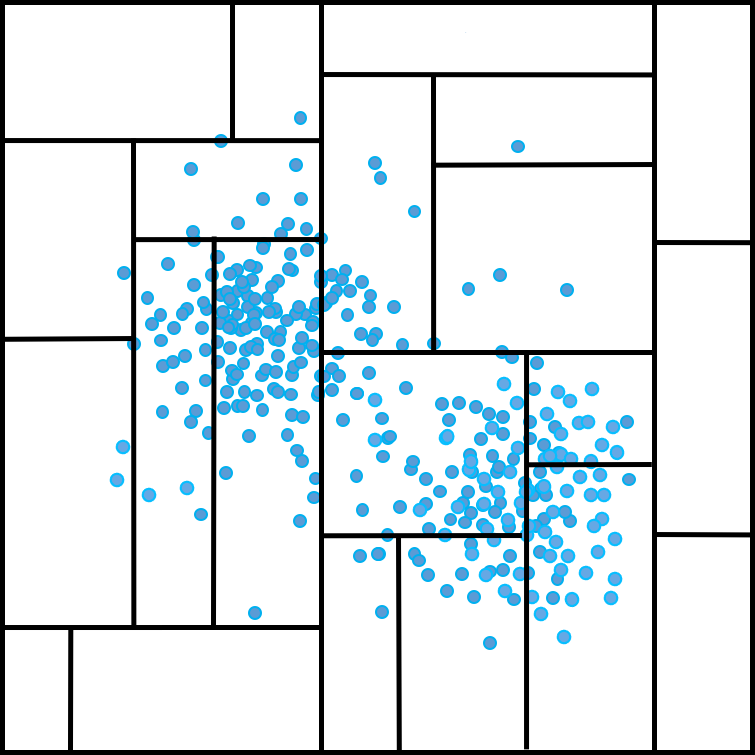}
		Purely Random Method
	\end{minipage}
\end{figure*}

Now we present the details of the construction of one random tree.
To begin with, we generate $k$ $p$-splitting adaptive random partitions. The main objective of the following work is to choose the partition with the best classification performance from $k$ candidates via a $10$-fold cross-validation.
Now we discuss the first round of the $10$-fold cross-validation.
Based on the training set of cross-validation, for each of the $k$ partitions, the corresponding classifier is derived by labeling each cells of that partition with $1$ or $-1$ according to the majority votes of the samples which fall into that cell.
Subsequently, we derive the validation errors which are the empirical risks based on the validation set of cross-validation for all $k$ base classifiers.
Once traversing all ten rounds, for each classifier, we are then able to calculate its average validation error for ten rounds.
As a result, we can choose the partition from all $k$ candidates with the smallest average validation error to be the exact partition for one tree.
Furthermore, by giving labels to all the cells of the chosen partition in accordance with the majority votes basing on the training data, which accounts for the $70\%$ of the overall data volume as mentioned before, we finally manage to construct one tree in the forest.
That is what we call the \emph{best-scored tree}. 
According to this construction approach, we obtain all $m$ trees which consequently form the \emph{best-scored random forest} for a fixed $(k, m)$ pair.

In summary, we successfully establish an appropriate \emph{best-scored random forest} for data at hand.

\subsection{Experimental Setup}
Having developed the \emph{best-scored random forest}, it is high time for us to conduct comparisons between our algorithm and different mainstream classification algorithms.
Test errors of a forest or classifiers from other approaches are evaluated by the empirical risk based on the testing data. 
In our experiments, we conduct comparisons among
baseline algorithms which are 
Breiman's random forest (RF),
Extremely Randomized Trees (ExtRa),
$k$-Nearest Neighbors ($k$-NN), and 
Support vector machines (SVMs), 
for those algorithms are well established.  
\begin{itemize}
	\item 
RF: The idea can be traced back to \cite{breiman2001random}. The leading parameters are mainly the number of trees in the forest, the number of samples for the construction of one tree, the number of possible features for splitting at each node of each tree and number of splits.
   \item
ExtRa: This algorithm proposed by \cite{geurts2006extremely}, is also a tree-based ensemble method for supervised learning while it selects splits, both attribute and cut-point, totally or partially at random. There are two parameters in the algorithm: the number of attributes randomly selected at each node and the minimum sample size for splitting a node $n_{min}$. 
	\item 
$k$-NN: Here we use \cite{Aha91}'s version of $k$-nearest neighbor and the leading parameter in the algorithm is the number of nearby points $k$. The idea of $k$-NN is inspired by \cite{Cover67};
	\item 
SVMs: The leading parameters in support vector machines are the regularization parameter and bandwidth. The idea originates from \cite{Cortes95}; 
\end{itemize}
For these algorithms, we simply used the Python-package scikit-learn with default settings. 

It is worth pointing out that our best-scored random forest have much more free parameters tunable compared to other methods. To be specific, these free parameters include the number of candidates  for choosing each decision tree $k$,  the total number of trees in the forest $m$, the number of splits for trees in the forest $p$, 
the parameter $a \in  [0,0.5]$ in the uniform distribution $\mathrm{Unif}[0.5-a, 0.5+a]$ for selecting the cut point.

\subsection{Practical Performance Analysis}

In our experiments, we conduct empirical comparisons on the classification accuracy among 
Breiman's random forest (RF),
Extremely Randomized Trees (ExtRa),
$k$-Nearest Neighbors ($k$-NN), and 
Support vector machines (SVMs), 
and our best-scored random forest (BRF) on the following UCI data sets:
\begin{itemize}
	\item \textbf{monks: } 
	The MONK's Problems is a data set which consists of three subsets. It is derived from a domain in which each training example is represented by six discrete-valued attributes. The second subset is used here to learn a binary function defined over this domain, from a sample of training examples of this function. We randomly split each data set into a training set containing 70\% of the total observations and a test set containing the remaining 30\% of the observations.
	\item \textbf{bcw:} 
	Breast Cancer Wisconsin contains data collected by Dr. Wolberg in his clinical cases throughout 1989 and 1991. Since samples arrive periodically, the database reflects this chronological grouping of the data. The whole data set consists of 699 samples of dimension 11 where the ratio of number of samples in the training set and the testing set is $7 : 3$. 
	\item \textbf{energy:} 
	Energy efficiency comprises 768 samples from energy analysis using 12 different building shapes simulated in Ecotect. The 8-dimensional input data represent attributes such as Wall Area, Roof Area, Overall Height and Orientation of the building, etc. As for outputs, we use the second one out of the two response variables for numerical study. The main learning task is to predict Cooling Load of the building when the input observation is available.
	\item \textbf{ILPD:}
	The data set Indian Liver Patient Data set contains 416 liver patient records and 167 non liver patient records. The data set was collected from 441 male patients and 142 female patients in north east of Andhra Pradesh, India. The data set consists of ten attributes such as Age, Gender, etc and the response variable {\tt Selector} is a class label used to divide into groups liver patient or not. 
	\item \textbf{ozone:}
	The database ozone available on UCI contains 2536 observations of dimension 73. Two ground ozone level data including the eight hour peak data and the one hour peak data is collected from 1998 to 2004 at the Houston, Galveston and Brazoria area. The learning goal is to predict the total column of ozone at any unobserved location.
	\item \textbf{Statlog:}
	This data set containing 690 observations of dimension 14 concerns credit card applications. It is interesting because there is a good mix of attributes -- continuous, nominal with small numbers of values, nominal with larger numbers of values and also a few missing values. The data is used to predict the response A1 and to notify, all attribute names and values have been changed to meaningless symbols to protect confidentiality of the data.
	
\end{itemize}
The following table summarizes the classification errors on test data of the UCI data sets. 
Here, $n$ and $d$ denote sample size and dimension, respectively.
We mention that
the hyper-parameters of each algorithm are selected by $3$-fold cross-validation, and
all the experiments are repeated for $50$ times.
\begin{table}[h] 
	\setlength{\tabcolsep}{7.5pt}
	\centering
	\captionsetup{justification=centering}
	\caption{Classification Error on Test Data of UCI Data Sets (\%)}
	\label{ClassificationErrorTable}
	\begin{tabular}{@{}l ||  c | ccccc@{}}
		\toprule
		\text{Data Sets} & $(n,d)$  & \text{RF} & \text{ExtRa}  & \text{$k$-NN} & \text{SVMs} & \text{BRF} \\ \midrule
        \hline
		\multirow{2}*{{\tt monks}}
		& \multirow{2}*{$(601, 6)$} 
		& $0.6613^{*}$ & $0.6597^{*}$ & $0.6589^{*}$ & $0.6587^{*}	$ & $\textbf{0.6681}$  \\ 
		& & $(\pm 0.0017)$ & $(\pm 0.0065)$ & $(\pm 0.0044)$ & $(\pm 0.0052)$ & $(\pm 0.0024)$  \\ 
		\hline
		\multirow{2}*{{\tt bcw}}
		& \multirow{2}*{$(699, 9)$} 
		& $0.9683^{*}$ & $0.9698^{*}$ & $0.9632^{*}$ & $0.9657^{*}$ & $\textbf{0.9720}$  \\ 
		& & $(\pm 0.0099)$ & $(\pm 0.0104)$ & $(\pm 0.0104)$ & $(\pm 0.01217)$ & $(\pm 0.0104)$  \\ 
		\hline
		\multirow{2}*{{\tt energy}}
		& \multirow{2}*{$(768, 8)$} 
		& $0.9156^{*}$ & $0.9174$ & $0.8956^{*}$ & $0.9169$ & $\textbf{0.9204}$  \\ 
		& & $(\pm 0.0168)$ & $(\pm 0.0180)$ & $(\pm 0.0163)$ & $(\pm 0.0181)$ & $(\pm 0.0153)$  \\ 
		\hline
		\multirow{2}*{{\tt ILPD}}
		& \multirow{2}*{$(583, 9)$} 
		& $0.6969^{*}$ & $0.7041^{*}$ & $0.6954^{*}$ & $0.7006^{*}$ & $\textbf{0.7113}$  \\ 
		& & $(\pm 0.0299)$ & $(\pm 0.0285)$ & $(\pm 0.0273)$ & $(\pm 0.0232)$ & $(\pm 0.0128)$  \\ 
		\hline
		\multirow{2}*{{\tt ozone}}
		& \multirow{2}*{$(2536, 72)$} 
		& $0.9702^{*}$ & $0.9707^{*}$ & $0.9698^{*}$ & $0.9698^{*}$ & $\textbf{0.9711}$  \\ 
		& & $(\pm 0.0012)$ & $(\pm 0.0010)$ & $(\pm 0.0023)$ & $(\pm 0.0029)$ & $(\pm 0.0000)$  \\ 
		\hline
		\multirow{2}*{{\tt Statlog}}
		& \multirow{2}*{$(690, 14)$} 
		& $0.6717$ & $0.6670^{*}$ & $0.6334^{*}$ & $0.6709$ & $\textbf{0.6727}$  \\ 
		& & $(\pm 0.0202)$ & $(\pm 0.0205)$ & $(\pm 0.0302)$ & $(\pm 0.0145)$ & $(\pm  0.0124)$  \\ 
		\bottomrule
	\end{tabular}
\begin{tablenotes}
	\footnotesize
	\item{*} indicates that our classification accuracy improvement over other algorithms is also supported by Wilcoxon signed ranked test (\cite{demsar2006statistical}) which assures the statistical significance at level $0.05$.
\end{tablenotes}
	\label{ErrorTable}
\end{table}

Having scrutinized the classification errors presented in Table \ref{ClassificationErrorTable}, we come to the conclusion that our best-scored random forest algorithm performs best among the above mentioned state-of-art classification algorithms.
Finally, we mention that experimental results presented in Table \ref{ClassificationErrorTable} are those we have temporarily tuned. Interested readers can further tune the free parameters and more accurate results could be obtained.

\section{Proofs}\label{sec::proofs}

To prove Proposition \ref{ApproxError}, we need the following result which follows from Lemma 6.2 in \cite{Devroye86a}.

\begin{lemma} \label{Saturation}
	For a binary search tree with $n$ nodes, denote the saturation level $S_n$ as the number of full levels of nodes in the tree. Then for $k \geq 1$ and $\log n > k + \log (k + 1)$, there holds
	\begin{align*}
	\mathrm{P} (S_n < k + 1) 
	\leq \Big( \frac{k + 1}{n} \Big) \Big( \frac{2e}{k} \log \Big( \frac{n}{k + 1} \Big) \Big)^k.
	\end{align*}
\end{lemma}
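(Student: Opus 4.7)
The plan is to reduce the statement to Lemma 6.2 of \cite{Devroye86a}, which already establishes essentially this bound for random binary search trees; the bulk of the work is in verifying that our formulation of the saturation level matches the setting there and in tracking the constants.

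First I would unpack the event $\{S_n < k+1\}$: it is equivalent to saying that some position at depth at most $k$ of the complete binary template fails to be realized, i.e., there is an external node at depth $\leq k$. Since the complete binary template up to depth $k$ contains $\sum_{j=0}^{k} 2^j = 2^{k+1} - 1$ such positions, the event decomposes as a union over these positions, and the argument becomes a union bound followed by an estimate of the per-position probability.

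Next, for each fixed external position at depth $j \leq k$, I would bound the probability that the corresponding cell receives no elements. Using the standard recursive distributional identity for random binary search trees---given $m$ elements at an internal node, the split into left and right subtrees is uniform on $\{0, 1, \ldots, m-1\}$---the subtree size at depth $j$ along a fixed root-to-leaf path can be expressed as a composition of such uniform splits. Standard combinatorial estimates, in particular $\binom{n}{k} \leq (en/k)^k$, would then convert the resulting hypergeometric-type expressions into the explicit product form $((2e/k) \log(n/(k+1)))^k$ appearing in the bound, while the prefactor $(k+1)/n$ reflects the contribution of the shallowest possible unfilled position.

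Finally, a union bound over the at most $2^{k+1}$ positions, together with the hypothesis $\log n > k + \log(k+1)$ that controls the regime in which these estimates are sharp (and prevents degenerate cases where $k$ is comparable to $\log n$), yields the stated inequality. I expect the main technical obstacle to be pinning down the precise constant $2e/k$ inside the logarithm: this requires careful Stirling-type approximations and honest bookkeeping of the dependencies among subtree sizes along different paths, which is exactly what is handled in Devroye's Lemma 6.2. Accordingly, my proof strategy reduces to matching hypotheses and quoting that lemma, rather than reproving the combinatorial core.
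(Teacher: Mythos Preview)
Your proposal is correct and matches the paper's approach exactly: the paper does not prove this lemma at all but simply states that it ``follows from Lemma 6.2 in \cite{Devroye86a}.'' Your additional sketch of the underlying union-bound/depth argument goes beyond what the paper supplies, but the essential strategy---quote Devroye's Lemma~6.2 after checking the hypotheses line up---is identical.
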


\begin{proof}[Proof of Proposition \ref{ApproxError}]
	For the $t$-th tree in the forest, for any fixed $\ell \in \{ 1, \ldots, k \}$, we need to bound the approximation error of $g_{Z_{\ell t}}^*$ defined in \eqref{GLP} first. 
	To begin with, we denote $g_{Z_{\ell t}, p}^*$ as the function that minimize $\mathcal{R}_{L, \mathrm{P}}(g) - \mathcal{R}_{L, \mathrm{P}}^*$ in $\mathcal{T}_{Z_{\ell t}, p}$ with number of splits $p$, where $\mathcal{T}_{Z_{\ell t}, p}$ is the function set as in \eqref{SpaceZl}. We then denote the collection of cells associated with the formation of  $g_{Z_{\ell t}, p}^*$ as $\mathcal{A}_Z : = \{ A_j , j = 0, \ldots, p \}$ and separate $\mathcal{A}_Z$ into two parts by whether the cells in $\mathcal{A}_Z$ intersect the decision line $\mathcal{X}_0$, that is
	\begin{align*}
	\mathcal{A}_1 & : = \{ A \in \mathcal{A}_Z \ | \ A \cap \mathcal{X}_1  = \emptyset \ \mathrm{or} \ A \cap \mathcal{X}_{-1} = \emptyset\}, \\
	\mathcal{A}_2 & : = \{ A \in \mathcal{A}_Z \ | \ A \cap \mathcal{X}_1  \neq \emptyset \ \mathrm{and} \ A \cap \mathcal{X}_{-1} \neq \emptyset\},
	\end{align*}
	where $\mathcal{X}_0$, $\mathcal{X}_1$ and $\mathcal{X}_{-1}$ are defined as in \eqref{DistanceSet}. 
	Then, Example 3.8 in \cite{StCh08} tells us that
	\begin{align*}
	\mathcal{R}_{L, \mathrm{P}} (g_{Z_{\ell t}^*, p}) - \mathcal{R}_{L, \mathrm{P}}^*
	& = \int_{\mathcal{X}_1 \Delta \{ g_{Z_{\ell t}^*, p}(x) = 1\}} | 2 \eta(x) -1 | \ d \mathrm{P}_X (x) \\
	& = \sum_{A \in \mathcal{A}_1} \int_{(\mathcal{X}_1 \Delta \{ g_{Z_{\ell t}, p}^*(x) = 1\}) \cap A} | 2 \eta(x) -1 | \ d \mathrm{P}_X (x) \\
	& \phantom{=} + \sum_{A \in \mathcal{A}_2} \int_{(\mathcal{X}_1 \Delta \{ g_{Z_{\ell t}, p}^*(x) = 1\}) \cap A} | 2 \eta(x) -1 | \ d \mathrm{P}_X (x).
	\end{align*}
	For all $A \in \mathcal{A}_1$, we have $A \cap \mathcal{X}_1= \emptyset$ or $A \cap \mathcal{X}_{-1} = \emptyset$ and thus $A \subset \mathcal{X}_1$ or $A \subset \mathcal{X}_{-1}$. By taking $g_{Z_{\ell t}, p}^* (x) = 1$ for $x \in A \subset \mathcal{X}_1$ and $g_{Z_{\ell t}, p}^* (x) = -1$ for $x \in A \subset \mathcal{X}_{-1}$, we have
	\begin{align*}
	\sum_{A \in \mathcal{A}_1} \int_{(\mathcal{X}_1 \Delta \{ g_{Z_{\ell t}, p}^*(x) = 1\}) \cap A} | 2 \eta(x) -1 | \ d \mathrm{P}_X (x) = 0.
	\end{align*}
	Therefore, only those cells which intersect the decision line contribute to the approximation error term. Further, we decompose the error by the diameter of the cells in $\mathcal{A}_2$. Denote
	\begin{align*}
	\mathcal{A}_3 := \{ A \in \mathcal{A}_2 \ | \ \mathrm{diam}(A) \le h \} 
	\qquad \mathrm{and} \qquad 
	\mathcal{A}_4 & := \{ A \in \mathcal{A}_2 \ | \ \mathrm{diam}(A) > h \},
	\end{align*}
	where $\mathrm{diam}(A)$ is the diameter of $A$.
	In the following proof, we consider the $L_1$-norm which leads to the definition of the diameter of the cell as
	$
	\mathrm{diam}(A) = \sum_{i=1}^d V_i (A), 
	$
	where $V_i (A)$ denotes the length of the $i$-th dimension of the rectangle cell $A$.
	then
	\begin{align} \label{ErrorDec}
	\mathcal{R}_{L, \mathrm{P}} (g_{Z_{\ell t}^*, p}) - \mathcal{R}_{L, \mathrm{P}}^*
	&= \sum_{A \in \mathcal{A}_2} \int_{(\mathcal{X}_1 \Delta \{ g_{Z_{\ell t}, p}^*(x) = 1\}) \cap A} | 2 \eta(x) -1 | \ d \mathrm{P}_X (x) 
	\nonumber \\
	& \le \sum_{A \in \mathcal{A}_2} \int_{A} | 2 \eta(x) -1 | \ d \mathrm{P}_X (x) 
	\nonumber \\
	& = \sum_{A \in \mathcal{A}_3} \int_{A} | 2 \eta(x) -1 | \ d \mathrm{P}_X (x) 
          + \sum_{A \in \mathcal{A}_4} \int_{A} | 2 \eta(x) -1 | \ d \mathrm{P}_X (x).
	\end{align}
	
	Let us now consider the first term in the decomposition \eqref{ErrorDec}. For $A \in \mathcal{A}_3$, the cell intersects the decision line and the diameter of the cell is less than $h$. Then for any $x \in A \in \mathcal{A}_3$, the distance to the decision boundary $\Delta(x)$ defined in \eqref{DistanceBound} satisfies $\Delta(x) \leq h$.
	Using Assumption \ref{MarginNoiseExponent}, we get
	\begin{align} \label{DecompositionTermOne}
	\sum_{A \in \mathcal{A}_3} \int_{A} |2 \eta(x) - 1| \ d\mathrm{P}_X(x)
	\leq \int_{\Delta(x) \leq h} |2 \eta(x) - 1| \, d\mathrm{P}_X(x)
	\leq c_\beta h^\beta,
	\end{align}
	where $c_\beta$ is the constant as in Assumption \ref{MarginNoiseExponent}.
	For the second term in the decomposition \eqref{ErrorDec}, 
	elementary considerations imply that
	\begin{align} \label{Approxempty} 
	\mathrm{P}_Z ( \{ A \ | \ A \in \mathcal{A}_4 \} = \emptyset)
	& =  \mathrm{P}_Z ( \{ A \in \mathcal{A}_2 \ | \  \text{diam}(A) > h \} = \emptyset) 
	\nonumber \\
	& \geq \mathrm{P}_Z ( \{ A \in \mathcal{A}_Z \ | \  \text{diam}(A) > h \} = \emptyset) 
	\nonumber \\
	& = \mathrm{P}_Z ( \forall A \in \mathcal{A}_Z : \text{diam}(A) \leq h ) 
	\nonumber \\
	& = \mathrm{P}_Z \Bigl( \max_{A \in \mathcal{A}_Z} \mathrm{diam}(A) \leq h \Bigr).
	\end{align}
	Then by Markov's inequality, we obtain
	\begin{align} \label{DecompositionMarkov}
	\mathrm{P}_Z \Big( \max_{A \in \mathcal{A}_Z} \mathrm{diam}(A) \leq h \Big)
	& \geq 1 - h^{-1} \mathbb{E}_Z \Big( \max_{A \in \mathcal{A}_Z} \ \mathrm{diam}(A) \Big) 
	\nonumber \\
	& = 1 - h^{-1} \mathbb{E}_Z \Big( \max_{A \in \mathcal{A}_Z} \sum_{i=1}^d V_i (A) \Big)  
	\nonumber \\
	& \geq 1 - h^{-1} \sum_{i=1}^d \mathbb{E}_Z \Big( \max_{A \in \mathcal{A}_Z} V_i (A) \Big).
	\end{align}
	Recall that $Z$ is defined by $(Q_0, Q_1, \ldots, Q_p, \ldots)$ where $Q_i = (L_i, R_i, S_i)$, $i=0, 1, \ldots$ in Section \ref{PRTF}. This shows that the randomness of $Z$ actually results from three aspects: randomly selecting nodes, randomly picking dimensions, randomly determining cut points. 
	The above analysis of the variable $Z$ describes the exact constructing process of the tree entirety.
	In order to ensure the feasibility of the calculation of expectation respect to $Z$ in \eqref{DecompositionMarkov}, we need to conduct analysis supposing that the tree is well-established.
	In particular, for each dimension, we only consider one cell that has the longest side length in its respective dimension. To mention, due to the symmetry of dimensions, it suffices to first focus on one dimension, e.g.~the $i$-th dimension and we denote the length of the $i$-th dimension of the corresponding cell as $\max_{A \in \mathcal{A}_Z} V_i (A) = : V_Z$. 
	To calculate $\mathbb{E}_Z(V_Z)$, we do not have to know the exact constructing procedures of the tree entirety. Instead, we still consider from three aspects which is intrinsically corresponding to above, but from a different view: the total number of splits that generates that specific rectangle cell during the construction, $T_Z$; 
	the number of splits which come from the $i$-th dimension in $T_Z$, $K_Z$ and $K_Z$ follows the binomial distribution $\mathcal{B}(T_Z, 1/d)$; 
	and proportional factors $U_1, U_2, \ldots, U_{K_Z}$ which are independent and identically distributed from $\mathrm{Unif}[0,1]$.
	According to the above statements, we come to the conclusion that the expectation with regard to $Z$ can be decomposed as $\mathbb{E}_Z = \mathbb{E}_{T_Z}\mathbb{E}_{K_Z|T_Z}\mathbb{E}_{U_1\ldots U_{K_Z}|K_Z}$.	
	Therefore, the expectation in the last step in \eqref{DecompositionMarkov} can be further analyzed as follows:
	\begin{align*}
	\mathbb{E}_Z V_Z
	& \leq \mathbb{E}_{T_Z} \biggl( 
	\mathbb{E}_{K_Z} \bigg( 
	\mathbb{E}_{U_1\ldots U_{K_Z}} \biggl( \prod_{j=1}^{K_Z} \max \{ U_j, 1 - U_j \} \Big| K_Z \biggr) \Big| T_Z \biggr) \biggr)
	\\
	& = \mathbb{E}_{T_Z} \Big( \mathbb{E}_{K_Z} \Big( \Big( \mathbb{E}_U \big( \max \{ U, 1 - U \} \big) \Big)^{K_Z}  \big| T_Z \Big)\Big)
	= \mathbb{E}_{T_Z} \Big( \mathbb{E}_{K_Z} \Big( \big( 3 / 4 \big)^{K_Z} \big| T_Z \Big) \Big) 
	\\
	& = \mathbb{E}_{T_Z} \biggl( 
	\sum_{K_Z=1}^{T_Z} {T_Z \choose K_Z} 
	\biggl( \frac{3}{4} \biggr)^{K_Z} 
	\biggl( \frac{1}{d} \biggr)^{K_Z} 
	\biggl( 1 - \frac{1}{d} \biggr)^{T_Z - K_Z} \biggr)
	\\
	& = \mathbb{E}_{T_Z} \biggl( 1 - \frac{1}{d} + \frac{3}{4d} \biggr)^{T_Z}
	= \mathbb{E}_{T_Z} \biggl( 1 - \frac{1}{4d} \biggr)^{T_Z}.
	\end{align*}
	Observing that when the underlying partition rule $Z$ has number of splits $p$, 
	the partition tree is statistically related to a random binary search tree with $p+1$ external nodes and $p$ internal nodes. 
	Then, Lemma \ref{Saturation} states that for $k \ge 1$ and $\log p > k + \log (k + 1)$,
	\begin{align*}
	\mathrm{P} (S_p < k + 1) 
	\leq \biggl( \frac{k + 1}{p} \biggr) 
	\biggl( \frac{2 e}{k} \log \biggl( \frac{p}{k + 1} \biggr) \biggr)^k,
	\end{align*}
	where $S_{p}$ is the \emph{saturation level}. In our setting, $S_{p}$ can be viewed as the minimal number of splits that generates any $A \in \mathcal{A}$. Now taking $k = \lfloor c_T \log p \rfloor$ where $c_T < 1$ and $c_T (1 + \log (2 e / c_T)) < 1$, simple calculation shows that
	\begin{align*}
	\mathrm{P} (T_Z < \lfloor c_T \log p \rfloor + 1)
	\leq \mathrm{P} (S_p < \lfloor c_T \log p \rfloor + 1)
	\leq K p^{c_T (1 + \log (2e / c_T)) - 1},
	\end{align*}
	where $K$ is a universal constant. Consequently for any $A \in \mathcal{A}$, we have
	\begin{align*}
	\mathbb{E}_Z V_Z
	& \leq \mathbb{E}_{T_Z} \Bigl( \Bigl( 1 - \frac{1}{4d} \Bigr)^{T_Z} \eins_{\{ T_Z < \lfloor c_T \log p \rfloor + 1\}} \Bigr)
	+ \mathbb{E}_{T_Z} \Bigl( \Bigl( 1 - \frac{1}{4d} \Bigr)^{T_Z} \eins_{\{T_Z \geq \lfloor c_T \log p \rfloor + 1\}} \Bigr)
	\\
	& \leq K p^{c_T (1 + \log (2e / c_T)) - 1} + \big( 1 - 1/(4d) \big)^{c_T \log p}
	\\
	& \leq K p^{c_T (1 + \log (2e / c_T)) - 1} + p^{- c_T / (4d)},
	\end{align*}
	where the last inequality follows from the fact that $1 - 1 / x < e^{-x}$ for all $x > 1$. Since the function $f(c_T) = 1 - c_T (1 + \log (2e / c_T)) - c_T / (4d)$ is monotone decreasing on $(0, 1)$ for all $d$, numerical computation shows that the largest constant for which $1 - c_T (1 + \log (2e / c_T)) > c_T / (4d)$ holds for all $d \ge 1$ cannot be greater than $0.22563$.
	Therefore, taking $c_T = 0.22$ and $C = K + 1$, there holds
	$\mathbb{E}_Z V_Z \le C p^{- c_T / (4d)}$.
	Therefore, we obtain that
	\begin{align} \label{DecompositionTwo}
	\sum_{i=1}^d \mathbb{E}_Z \Big( \max_{A \in \mathcal{A}_Z} V_i (A) \Big)
	\leq C d p^{- c_T / (4d)}.
	\end{align}
	Combining \eqref{Approxempty}, \eqref{DecompositionMarkov} and \eqref{DecompositionTwo}, we have 
	\begin{align} \label{EstimationTwo}
	\mathrm{P}_Z ( \{ A \ | \ A \in \mathcal{A}_4 \} = \emptyset ) 
	\geq 1 - C d h^{-1} p^{- c_T / (4d)}.
	\end{align}
	In other words, with probability at least $1 - C d h^{-1} p^{- c_T / (4d)}$, the second term in the error decomposition \eqref{ErrorDec} vanishes. 
	
	Now, the estimation \eqref{EstimationTwo} together with \eqref{DecompositionTermOne} yields that 
	\begin{align*}
	\mathcal{R}_{L, \mathrm{P}}(g_{Z_{\ell t}, p}^*) - \mathcal{R}_{L, \mathrm{P}}^*
	\leq c_{\beta}h^{\beta}
	\end{align*}
	holds with probability at least $1 - C d h^{-1} p^{- c_T / (4d)}$.
	By taking $e^{-\theta} := C d h^{-1} p^{- c_T / (4d)}$ and adding the regularization term to both sides of the above inequality, 
	simple calculation shows that with probability $\mathrm{P}_Z$ at least $1-e^{-\theta}$, there holds
	\begin{align} \label{ResultOne}
	\lambda p^2
	+ \mathcal{R}_{L, \mathrm{P}}(g_{Z_{\ell t}^*, p}) - \mathcal{R}_{L, \mathrm{P}}^*
	\leq \lambda p^2 + c_{\beta} \Big(C d e^{\theta} p^{- c_T / (4d)} \Big)^{\beta} 
	= : \lambda p^2 + c_{\beta} \Big(d e^{\theta} p^{- c_T / (4d)} \Big)^{\beta}.
	\end{align} 
	By the definition of $g_{Z_{\ell t}}^*$ and minimizing both side of \eqref{ResultOne} with respect to $p$, we obtain 
	\begin{align} \label{ApproximationEstimationSingle}
	\lambda p_{Z_{\ell t}}^{*2}
	+ \mathcal{R}_{L, \mathrm{P}}(g_{Z_{\ell t}}^*) - \mathcal{R}_{L, \mathrm{P}}^*
	\le c_{d,\beta} \ e^{\frac{8 d \theta \beta}{c_{T} \beta + 8 d}} \lambda^{\frac{c_T \beta}{c_T \beta + 8 d}}
	\end{align}
	with the constant $c_{d, \beta} := (c_T \beta + 8 d)
	(c_T \beta)^{- \frac{c_T \beta}{c_T \beta + 8 d}}
	(c_\beta d^\beta/(8d))^{\frac{8 d}{c_T \beta + 8 d}}$.
	Here, the minimum is achieved when taking
	\begin{align*}
	p = \Big(\frac{ c_T c_{\beta}\beta d^\beta e^{\theta \beta}}{8d\lambda}\Big)^{\frac{4d}{c_T\beta+8d}}.
	\end{align*}
	
	From the definition of $g_{Z_t}^*$ in \eqref{BestScoreMinimizerPopulation},
	we see that it is the decision function that minimize the approximation error among the $k$ independent trials $g_{Z_{\ell t}}^*$, $\ell = 1, \ldots, k$. 
	Therefore, the independence together with \eqref{ApproximationEstimationSingle} implies that
	\begin{align*}
	& \mathrm{P}_Z \bigg( \lambda p_{Z_t}^{*2} + \mathcal{R}_{L, \mathrm{P}} (g_{Z_t}^*) - \mathcal{R}_{L, \mathrm{P}}^*
	\geq c_{d, \beta} \ e^{\frac{8 d \beta \theta}{c_{T} \beta + 8 d}} \lambda^{\frac{c_T \beta}{c_T \beta + 8 d}} \bigg)
	\\
	& = \mathrm{P}_Z \bigg( \forall \ell \in \{1, \ldots, k\},\ \lambda p_{Z_{\ell t}}^{*2} + \mathcal{R}_{L, \mathrm{P}}(g_{Z_{\ell t}}^*) - \mathcal{R}_{L, \mathrm{P}}^*
	\geq c_{d, \beta} \ e^{\frac{8 d \beta \theta}{c_{T} \beta + 8 d}} \lambda^{\frac{c_T \beta}{c_T \beta + 8d}} \bigg)
	\\
	& = \bigg( \mathrm{P}_Z \bigg( \lambda p_{Z_{\ell t}}^{*2} + \mathcal{R}_{L, \mathrm{P}}(g_{Z_{\ell t}}^*) - \mathcal{R}_{L, \mathrm{P}}^*
	\geq c_{d, \beta} \ e^{\frac{8 d \beta \theta}{c_{T} \beta + 8 d}} \lambda^{\frac{c_T \beta}{c_T \beta + 8 d}}\bigg)\bigg)^k
	\le e^{- k \theta}.
	\end{align*}
	Simple calculation with $\tau := k \theta$
	yields that 
	\begin{align*}
	\lambda p_{Z_t}^{*2} + \mathcal{R}_{L, \mathrm{P}}(g_{Z_t}^*) - \mathcal{R}_{L, \mathrm{P}}^*
	\le c_{d, \beta} \ 
	e^{\frac{8 d \tau \beta}{k(c_T \beta + 8 d)}}
	\lambda^{\frac{c_T \beta}{c_T \beta + 8 d}}
	\end{align*}
	holds with probability $\mathrm{P}_Z$ at least $1-e^{-\tau}$.
\end{proof}

\begin{proof}[Proof of Lemma \ref{VCdimension}]
	The proof will be conducted by means of
	geometric constructions. 
	
	\begin{figure*}[htbp]
		\centering
		\begin{minipage}[b]{0.18\textwidth}
			\centering
			\includegraphics[width=\textwidth]{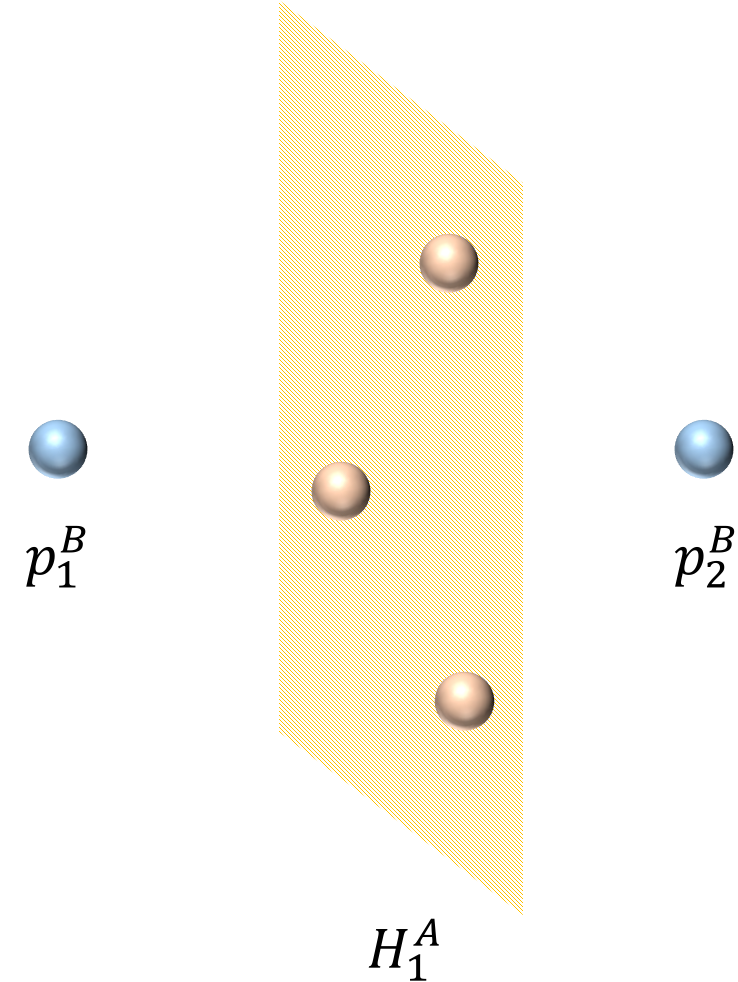}
			$p=1$
			\centering
			\label{fig::p=1}
		\end{minipage}
		\qquad
		\begin{minipage}[b]{0.25\textwidth}
			\centering
			\includegraphics[width=\textwidth]{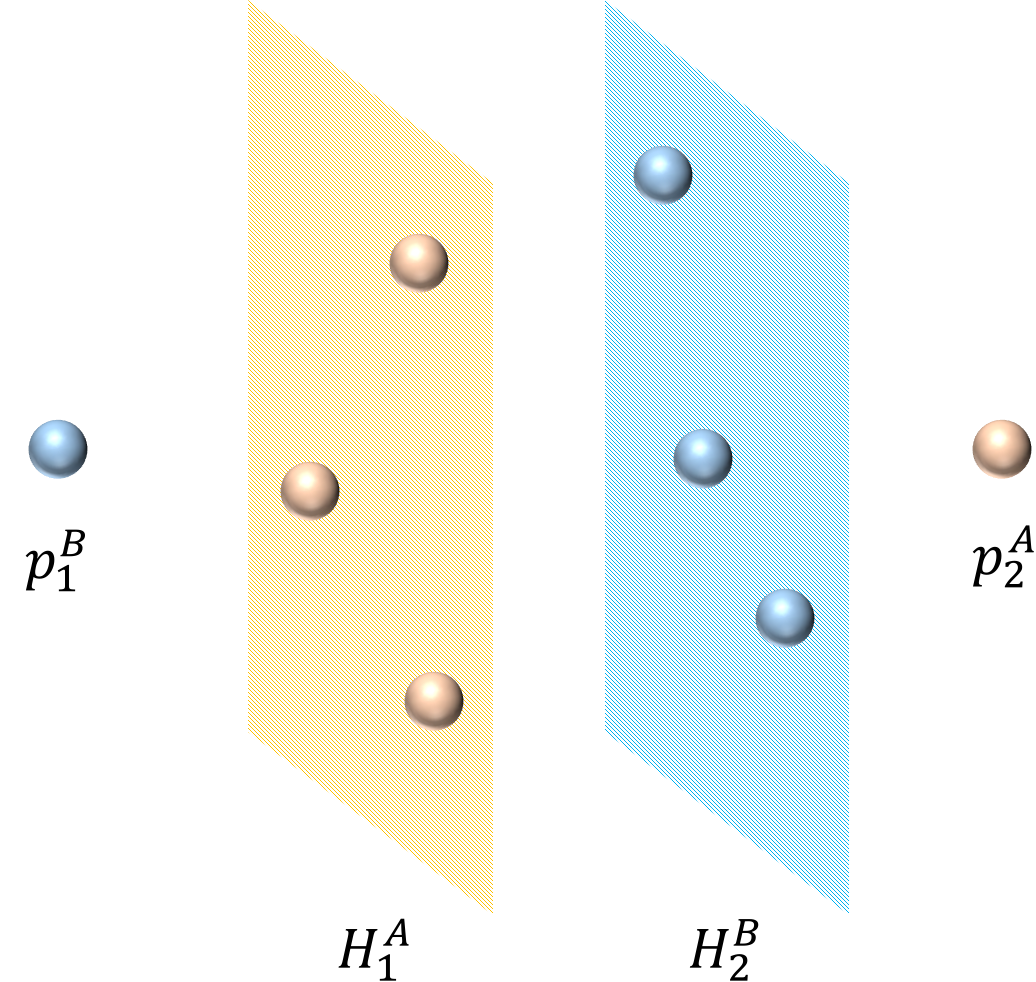}
			$p=2$
			\label{fig::p=2}
		\end{minipage}
		\qquad
		\begin{minipage}[b]{0.43\textwidth}
			\centering
			\includegraphics[width=\textwidth]{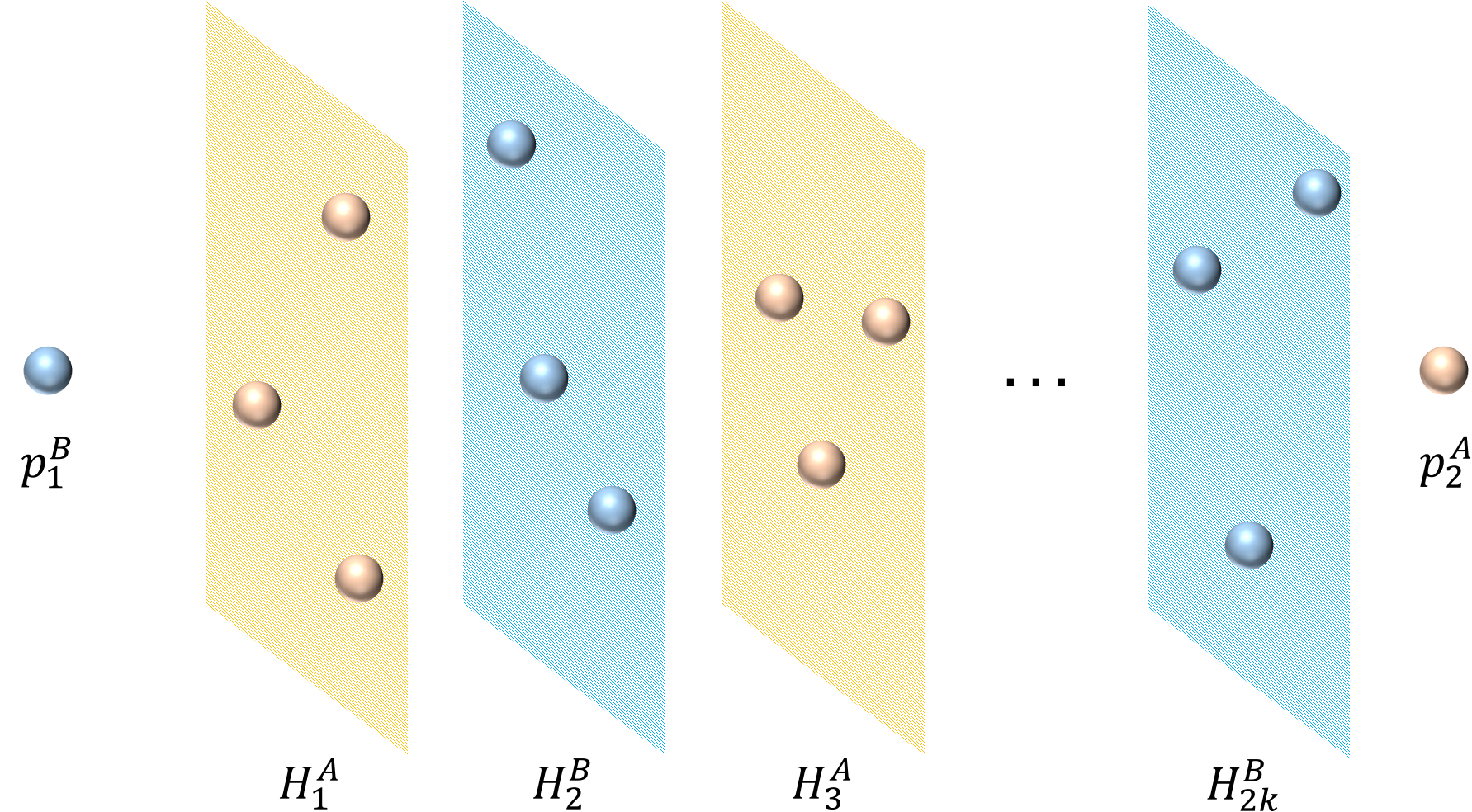}
			$p=2k$
			\label{fig::p=2k}
		\end{minipage}
		\caption{We take one case with $d=3$ as an example to illustrate the geometric interpretation of the VC dimension. The yellow balls represent samples from class $A$, blue ones are from class $B$ and slices denote the hyperplanes formed by samples. }
		\label{fig::VC}
	\end{figure*}

	Let us start with the observation of a partition with number of splits $p=1$. Since the dimension of the feature space is $d$, the smallest number of points that cannot be divided by $p=1$ split is $d+2$. To be specific, due to the fact that $d$ points can be used to form $d-1$ independent vectors and thus a hyperplane of a $d$-dimensional space, we consider the case where there is a hyperplane consisting of $d$ points all from one class labeled as $A$, and two points from the other class $B$ located on different sides of the hyperplane, respectively. That hyperplane is denoted as $H_1^A$ for the sake of convenience. Here, points from two classes cannot be separated by one split, that is, one hyperplane,  which means that $\mathrm{VC}(\mathcal{B}_1) \leq d + 2$. 
	
	We next consider the partition with number of splits $p=2$ by extending the above case. If we pick one point out of the two that are located on either side of the above hyperplane $H_1^A$, then we can add $d-1$ more points from class $B$ so that they together form a hyperplane $H_2^B$ parallel to $H_1^A$. Then, we add a new point from class $A$ to the side of the newly constructed hyperplane $H_2^B$. Here, the newly added point should be on the opposite side to $H_1^A$. In this case, $p=2$ splits cannot separate these points of the total number $2d+2$ from two different classes. Consequently, we conclude that $\mathrm{VC}(\mathcal{B}_2) \leq 2d + 2$.
	
	Inductively, the above analysis can be extended to the general case of number of splits $p \in \mathbb{N}$. What we have to do is to add points continuously to form $p$ (mutually) parallel hyperplanes, and any two adjacent hyperplanes should be constructed from different classes. W.l.o.g.~we may consider the case for $p=2k+1$, $k \in \mathbb{N}$, where two points (denoted as $p_1^B$, $p_2^B$) from class $B$ and $2k+1$ alternately appearing hyperplanes should be located as $p_1^B, H_1^A, H_2^B, H_3^A, H_4^B, \ldots, H_{(2k+1)}^A, p_2^B$. 
	By this construction, we see that the smallest number of points that cannot be divided by $p$ splits is $dp+2$, which leads to $\mathrm{VC}(\mathcal{B}_p) \leq d p + 2$.
	
	It should be noted that our hyperplanes can be generated vertically which are in line with our splitting criteria for the decision trees. This completes the proof.
\end{proof}

\begin{proof}[Proof of Lemma \ref{BpTpCoveringNumbers}]
	The inequality \eqref{BpCoveringNumber} follows directly from 
	Lemma \ref{VCdimension} and Theorem 9.2 in \cite{Kosorok08}.
	For the inequality \eqref{TpCoveringNumber}, denote the covering number of $\eins_{\mathcal{B}_p}$ with respect to $\| \cdot \|_{L_2(Q)}$ as $\mathcal{N}(\varepsilon) : = \mathcal{N} \bigl( \eins_{\mathcal{B}_p}, \|\cdot\|_{L_2(Q)}, \varepsilon \bigr)$. Then, there exist $\eins_{B_1}, \ldots, \eins_{B_{\mathcal{N}(\varepsilon)}} \in \eins_{\mathcal{B}_p}$ such that the function set $\{\eins_{B_1}, \ldots, \eins_{B_{\mathcal{N}(\varepsilon)}}\}$ is an $\varepsilon$-net of $\eins_{\mathcal{B}_p}$ with respect to $\|\cdot\|_{L_2(Q)}$. This implies that for any $\eins_{B} \in \eins_{\mathcal{B}_p}$, there exists a $j \in \{ 1, \ldots, \mathcal{N}(\varepsilon) \}$ such that $\| \eins_B - \eins_{B_j} \|_{L_2(Q)} \leq \varepsilon$. 
	Now, for all $g \in \mathcal{T}_p$, the equivalent definition \eqref{Tp2} of $\mathcal{T}_p$ tells us that there exists a $\eins_{B} \in \eins_{\mathcal{B}_p}$
	such that $g$ can be written as $g = \eins_B - \eins_{B^c} = 2 \eins_B - 1$. The above discussion yields that there exists a $j \in \{ 1, \ldots, \mathcal{N}(\varepsilon) \}$ such that for $g_j := 2 \eins_{B_j} - 1$, there holds
	\begin{align*}
	\| g - g_j \|_{L_2(Q)} 
	= \| (2 \eins_B - 1) - (2 \eins_{B_j} - 1) \|_{L_2(Q)} 
	= \| 2 \eins_B - 2 \eins_{B_j} \|_{L_2(Q)} 
	= 2 \| \eins_B - \eins_{B_j} \|_{L_2(Q)} 
	\leq 2 \varepsilon.
	\end{align*}
	This implies that $\{ g_1, \ldots, g_{\mathcal{N}(\varepsilon)} \}$ is a $2 \varepsilon$-net of $\mathcal{T}_p$ with respect to $\|\cdot\|_{L_2(Q)}$. Consequently, we obtain
	\begin{align*}
	\mathcal{N} \bigl( \mathcal{T}_p, \|\cdot\|_{L_2(Q)}, \varepsilon \bigr) 
	\leq \mathcal{N} \bigl( \eins_{\mathcal{B}_p}, \|\cdot\|_{L_2(Q)}, \varepsilon / 2 \bigr) 
	\leq K (d p + 2) (4 e)^{d p + 2} ( 2 / \varepsilon )^{2 (d p + 1)},
	\end{align*}
	we thus proved the assertion.
\end{proof}

\begin{proof}[Proof of Lemma \ref{HrEntropyNumber}]
	First, we notice that
	for all $g \in \mathcal{G}_r$, the number of splits $p$ must satisfy $p \leq q := \lfloor (r/\lambda)^{1/2} \rfloor$. 
	Then, the nested relation \eqref{NestRelation} implies that $\mathcal{G}_r \subset \mathcal{T}_q$. 
	Therefore, Lemma \ref{BpTpCoveringNumbers} implies that the covering number of $\mathcal{G}_r$ with respect to $L_2(D)$ satisfies
	\begin{align} \label{GrCoveringNumber}
	\mathcal{N} ( \mathcal{G}_r, \|\cdot\|_{L_2(D)}, \varepsilon)
	\leq \mathcal{N} ( \mathcal{T}_q, \|\cdot\|_{L_2(D)}, \varepsilon)
	\leq  K (d q + 2) (4e)^{d q + 2} ( 2 / \varepsilon)^{2 ( d q + 1)}.
	\end{align}
	For the classification loss $L$, we have for any $h \in \mathcal{H}_r$, 
	\begin{align*}
	h(x, y) = L \circ g \, (x, y) - L \circ f_{L, \mathrm{P}}^*\, (x, y) = \eins_{\{ g(x) \neq y \}} - \eins_{\{ f_{L, \mathrm{P}}^*(x) \neq y \}},
	\end{align*}
	where $g \in \mathcal{G}_r$. This implies that for any $h, \tilde{h} \in \mathcal{H}_r$, there exist $g, \tilde{g} \in \mathcal{G}_r$
	\begin{align*}
	\| h - \tilde{h} \|_{L_2(D)} 
	& = \biggl( \frac{1}{n} \sum_{i=1}^n \bigl( h(x_i, y_i) - \tilde{h}(x_i, y_i) \bigr)^2 \biggr)^{1/2} 
	    = \biggl( \frac{1}{n} \sum_{i=1}^n \bigl( \eins_{\{ g(x) \neq y \}} - \eins_{\{ \tilde{g} (x) \neq y \}} \bigr)^2 \biggr)^{1/2}
	\\
	& = \frac{1}{2}  \biggl( \frac{1}{n} \sum_{i=1}^n \bigl( g(x) - \tilde{g} (x) \bigr)^2 \biggr)^{1/2}
	    = \frac{1}{2} \| g - \tilde{g} \|_{L_2(D)}. 
	\end{align*}
	Therefore, similarly as the proof in Lemma \ref{BpTpCoveringNumbers}, we obtain
	\begin{align*}
	\mathcal{N} (\mathcal{H}_r, \|\cdot\|_{L_2(D)}, \varepsilon)
	\leq \mathcal{N} (\mathcal{G}_r, \|\cdot\|_{L_2(D)}, 2 \varepsilon)
	\leq K (d q + 2) (4e)^{d q + 2} ( 1 / \varepsilon)^{2 ( d q + 1)},
	\end{align*}
	where the later inequality follows from the estimate \eqref{GrCoveringNumber}. Elementary calculations show that for any $ 0< \varepsilon < 1/ \max \{ e, K \}$ and $q \geq 2$, there holds
	\begin{align*}
	\log \mathcal{N} (\mathcal{H}_r, \|\cdot\|_{L_2(D)}, \varepsilon)
	\leq 12 d q \log ( 1 / \varepsilon )
	\leq 12 d (r / \lambda)^{1/2} \log ( 1 / \varepsilon ).
	\end{align*}
	Consequently, for all $\delta \in (0, 1)$, we have
	\begin{align} \label{Differenting}
	\sup_{\varepsilon \in (0, 1 / \max \{ e, K \})} \  \varepsilon^{2 \delta} \log \mathcal{N} (\mathcal{H}_r, \|\cdot\|_{L_2(D)}, \varepsilon)
	\leq 12 d (r / \lambda)^{1/2} \sup_{\varepsilon \in (0,1)} \varepsilon^{2 \delta} \log ( 1 / \varepsilon ).
	\end{align}
	For any fixed $\delta \in (0, 1)$,
	simple analysis shows that the right hand side of \eqref{Differenting} is maximized
	at $\varepsilon^* = e^{- 1 / (2 \delta)}$ and consequently we obtain
	\begin{align*}
	\log \mathcal{N} (\mathcal{H}_r, \|\cdot\|_{L_2(D)}, \varepsilon)
	\le \frac{6 d}{e \delta} \Big(\frac{r}{\lambda}\Big)^{1/2} \varepsilon^{-2 \delta}.
	\end{align*}
	Then, Exercise 6.8 in \cite{StCh08} implies that the entropy number bound of $\mathcal{H}_r$ with respect to $L_2(D)$ satisfies
	\begin{align*}
	e_i (\mathcal{H}_r, \|\cdot\|_{L_2(D)})
	\leq \biggl( \frac{18 d}{e \delta} \biggr)^{1/(2 \delta)}
	\biggl( \frac{r}{\lambda} \biggr)^{1/(4 \delta)}
	i^{- 1/(2 \delta)}.
	\end{align*}
	Obviously, this bound holds for $\mathbb{E}_{D \sim \mathrm{P}^n} \, e_i (\mathcal{H}_r, \|\cdot\|_{L_2(D)})$ as well. The proof is finished.
\end{proof}

\begin{proof}[Proof of Proposition \ref{Rademacher}]
	First of all, Lemma \ref{VarianceBoundLemma} shows that 
	for all $h \in \mathcal{H}_r$, there holds
	$
	\mathbb{E}_{\mathrm{P}} h^2
	\leq V r^\vartheta
	=: \sigma^2.
	$
	Now, $\|h\|_{\infty} \leq 1 =: B$ and
	$a := c_{d, \delta} (r/\lambda)^{1/(4\delta)} \geq B$ in Lemma \ref{HrEntropyNumber} together with Theorem 7.16 in \cite{StCh08}
	yields that
	\begin{align*}
	\mathbb{E}_{D \sim \mathrm{P}^n} \ 
	\mathrm{Rad}_D (\mathcal{H}_r, n)
	\leq \max \Bigl\{ c_{\delta,1} c_{d, \delta}^\delta V^{\frac{1 - \delta}{2}} \lambda^{- \frac{1}{4}} r^{\frac{2 \vartheta - 2 \vartheta \delta + 1}{4}} n^{- \frac{1}{2}}, c_{\delta,2} c_{d, \delta}^{\frac{2 \delta}{1 + \delta}} \lambda^{- \frac{1}{2 + 2 \delta}} r^{\frac{1}{2 + 2 \delta}} n^{- \frac{1}{1 + \delta}}
	\Bigr\},
	\end{align*}
	where 
	$c_{\delta,1} : = \frac{2 \sqrt{\log 256} c_\delta^\delta}{(\sqrt{2} - 1) (1 - \delta) 2^{\delta / 2}}$
	and
	$c_{\delta,2} : = \Big(\frac{8 \sqrt{\log 16} c_\delta^\delta}{(\sqrt{2} - 1) (1 - \delta) 4^{\delta}} \Big)^{\frac{2}{1 + \delta}}$,
	with
    $c_\delta := \frac{\sqrt{2} - 1}{\sqrt{2} - 2^{\frac{2 \delta - 1}{2 \delta}}}
	\cdot \frac{1 - \delta}{\delta}$,
	which can be upper bounded by $46e$ and $1035 e^2$ respectively.
\end{proof}

To prove Theorem \ref{OracleInequality}, we still need the following lemma which gives the variance bound for the classification loss.

\begin{lemma} \label{VarianceBoundLemma}
	Let $L$ be the classification loss,
	$\mathcal{T}$ be the function set defined as in \eqref{Space}.
	Then, for all $g \in \mathcal{T}$, the variance bound
	\begin{align} \label{VarianceBound}
	\mathbb{E} \bigl( L \circ g - L \circ f_{L, P}^* \bigr)^2 
	\leq V \bigl( \mathbb{E} ( L \circ g - L \circ f_{L, P}^* ) \bigr)^\vartheta
	\end{align}
	holds for $\vartheta = \alpha / (1 + \alpha)$ and the constant $V$ depending on $\alpha$.
\end{lemma}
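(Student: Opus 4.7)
The plan is to reduce the variance bound to a known Tsybakov-type calibration inequality between the quantity $\mathrm{P}_X(\{g \neq f_{L,\mathrm{P}}^*\})$ and the excess classification risk, and then exploit the noise exponent via a threshold-splitting argument.

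First I would exploit that both $g$ and $f_{L,\mathrm{P}}^* = \sign(2\eta-1)$ take values in $\{-1,1\}$ to simplify the left-hand side. A case analysis on the four sign configurations of $(g(x), f_{L,\mathrm{P}}^*(x))$ and the two values of $y$ yields
\begin{align*}
\bigl(L \circ g(x,y) - L \circ f_{L,\mathrm{P}}^*(x,y)\bigr)^2 = \eins_{\{g(x) \neq f_{L,\mathrm{P}}^*(x)\}},
\end{align*}
independently of $y$. Taking expectations gives $\mathbb{E}(L \circ g - L \circ f_{L,\mathrm{P}}^*)^2 = \mathrm{P}_X(A)$ where $A := \{x : g(x) \neq f_{L,\mathrm{P}}^*(x)\}$. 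On the other hand, the standard identity (Example 3.8 in Steinwart--Christmann) gives the excess risk
\begin{align*}
\mathbb{E}\bigl(L \circ g - L \circ f_{L,\mathrm{P}}^*\bigr) = \int_A |2\eta(x)-1|\, d\mathrm{P}_X(x) =: \Phi(g).
\end{align*}
So the task reduces to proving $\mathrm{P}_X(A) \leq V\,\Phi(g)^{\vartheta}$.

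Next, for any $h > 0$ I would split $A$ according to whether $|2\eta(x)-1|$ exceeds $h$. The low-noise piece is controlled by Assumption \ref{NoiseExponent}:
\begin{align*}
\mathrm{P}_X\bigl(A \cap \{|2\eta-1| \leq h\}\bigr) \leq \mathrm{P}_X\bigl(\{|2\eta-1| \leq h\}\bigr) \leq c_\alpha h^\alpha.
\end{align*}
The high-noise piece is controlled by a Markov-type inequality, since on $\{|2\eta-1| > h\}$ the integrand in $\Phi(g)$ is at least $h$:
\begin{align*}
\mathrm{P}_X\bigl(A \cap \{|2\eta-1| > h\}\bigr) \leq \frac{1}{h}\int_A |2\eta-1|\, d\mathrm{P}_X = \frac{\Phi(g)}{h}.
\end{align*}
Adding the two bounds gives $\mathrm{P}_X(A) \leq c_\alpha h^\alpha + \Phi(g)/h$ for every $h>0$.

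Finally I would optimize the right-hand side in $h$; elementary calculus yields the minimizer $h^* \propto \Phi(g)^{1/(\alpha+1)}$, and substituting back produces the claimed bound with $\vartheta = \alpha/(1+\alpha)$ and a constant $V$ depending only on $\alpha$ and $c_\alpha$. The case $\alpha=0$ is trivial (take $V=1$ since the squared loss difference is bounded by $1$), and $\alpha=\infty$ corresponds to $\vartheta = 1$, which follows by letting $h \to 0$. There is no real obstacle here beyond bookkeeping: the only subtle point is verifying the pointwise identity $(L\circ g - L\circ f_{L,\mathrm{P}}^*)^2 = \eins_{\{g \neq f_{L,\mathrm{P}}^*\}}$, after which the argument is the textbook Tsybakov variance bound (cf.\ Theorem 8.24 in Steinwart--Christmann).
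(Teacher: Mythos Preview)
Your proposal is correct and follows essentially the same route as the paper's proof. The only cosmetic difference is notation: the paper writes the key pointwise identity as $(L\circ g - L\circ f_{L,\mathrm{P}}^*)^2 = \tfrac{1}{2}|g(x)-f_{L,\mathrm{P}}^*(x)|$ rather than $\eins_{\{g\neq f_{L,\mathrm{P}}^*\}}$, but since $g,f_{L,\mathrm{P}}^*\in\{-1,1\}$ these are identical; from there both arguments perform the same threshold split at level $h$, apply Assumption~\ref{NoiseExponent} to the low-margin part, bound the high-margin part by $h^{-1}$ times the excess risk, and optimize in $h$ to obtain the exponent $\vartheta=\alpha/(1+\alpha)$.
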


\begin{proof}[Proof of Lemma \ref{VarianceBoundLemma}]
	For the classification loss $L$, there holds
	\begin{align*}
	\big( L \circ g - L \circ f_{L, P}^* \big)^2 
	= \big( \eins_{\{ g(x) \neq y \}} - \eins_{\{ f_{L, P}^*(x) \neq y \}} \big)^2
	= \frac{1}{2} \big| g(x) - f_{L, P}^*(x) \big|.
	\end{align*}
	Moreover, Example 3.8 in \cite{StCh08} implies that
	\begin{align*}
	\mathbb{E} \big( L \circ g - L \circ f_{L, P}^* \big) 
	= \frac{1}{2} \int_{\mathcal{X}} \big| 2 \eta(x) - 1 \big| 
	\big| g(x) - f_{L, P}^*(x) \big| \, d\mathrm{P}_X(x)
	\end{align*}
	holds for $L$ as well.
	Combining the above two equalities and using Assumption \ref{NoiseExponent}, we obtain 
	\begin{align} 
	& \mathbb{E} ( L \circ g - L \circ f_{L, P}^*)^2
	\nonumber\\
	& = \frac{1}{2} \mathbb{E} \bigl| g(x) - f_{L, P}^*(x) \bigr|
	= \frac{1}{2} \int_\mathcal{X} \bigl| g(x) - f_{L, P}^*(x) \bigr| \ d \mathrm{P}_X(x)
	\nonumber\\
	& = \frac{1}{2} \int_\mathcal{X} \bigl| g(x) - f_{L, P}^*(x) \bigr| \eins_{\{|2\eta(x) - 1| \leq h\}} \ d \mathrm{P}_X(x)
	+ \frac{1}{2} \int_\mathcal{X} \bigl| g(x) - f_{L, P}^*(x) \bigr| \eins_{\{|2\eta(x) - 1| > h\}} \ d \mathrm{P}_X(x)
	\nonumber\\
	& \leq \mathrm{P}_X \bigl( \{ x \in \mathcal{X} : |2 \eta(x) - 1| \leq h \} \bigr) 
	+ \frac{1}{2h} \int_\mathcal{X} \bigl| 2 \eta(x) - 1 \bigr|  \bigl| g(x) - f_{L, P}^*(x) \bigr| \ d \mathrm{P}_X(x)
	\nonumber\\
	& \leq c_\alpha h^\alpha + h^{-1} \mathbb{E} ( L \circ g - L \circ f_{L, P}^*)
	\label{VarianceBoundInequality}
	\end{align}
	for all $h > 0$. Minimizing \eqref{VarianceBoundInequality} with respect to $h$, we get
	\begin{align*}
	\mathbb{E} \bigl( L \circ g - L \circ f_{L, P}^* \bigr)^2
	\leq c \bigl( \mathbb{E} ( L \circ g - L \circ f_{L, P}^* ) \bigr)^{\alpha / (1 + \alpha)}
	\leq (1 + c) \bigl( \mathbb{E} ( L \circ g - L \circ f_{L, P}^* ) \bigr)^{\alpha / (1 + \alpha)}
	\end{align*}
	with $c : = c_\alpha^{1 / (1 + \alpha)} \bigl( \alpha^{1 / (1 + \alpha)} + \alpha^{- \alpha / (1 + \alpha)} \bigr)$. The assertion is thus proved.
\end{proof}

\begin{proof}[Proof of Theorem \ref{OracleInequality}]
	Obviously, for the classification loss $L$, we have $\| L \|_{\infty} \leq 1$. Therefore,  
	the supreme bound is satisfied for $B = 1$ and the constant $B_0$ can be taken as $1$ as well. Moreover, Lemma \ref{VarianceBoundLemma} implies that the variance bound \eqref{VarianceBound} holds for $V$ and $\vartheta$. The condition that $V \ge B^{2 - \vartheta}$ is satisfied.
	
	Let us first consider the case $\lambda n^2 < 1$. Obviously, we have
	\begin{align*}
	\lambda p_{Z_t}^2 + \mathcal{R}_{L, \mathrm{P}}(g_{Z_t}) - \mathcal{R}_{L, \mathrm{P}}^*
	\leq \lambda p_{Z_t}^2 + \mathcal{R}_{L, D}(g_{Z_t}) + B
	\leq \mathcal{R}_{L, D} (1) + B
    \leq 2 B \big(\lambda^{-1} n^{-2} \big)^{\frac{1}{3 - 2 (1 - \delta) \vartheta}}.
	\end{align*}
	For the case $\lambda n^2 \geq 1$, let $c_{\delta, 1}$, $c_{\delta, 2}$ be the constants defined as in Proposition \ref{Rademacher},
	$c_{d, \delta}$ be the constant as in Lemma \ref{HrEntropyNumber}.
	For fixed $n \geq 1$ and all $r > r^*$, 
	Proposition \ref{Rademacher} implies that the expectations of the empirical Radmacher averages of $\mathcal{H}_r$ \eqref{Hr}
	can be upper bounded by
	the function $\varphi_n(r)$ defined by
	\begin{align*}
	\varphi_n(r)
	: = \max \Bigl\{ c_{\delta, 1} c_{d, \delta}^\delta V^{\frac{1 - \delta}{2}} \lambda^{- \frac{1}{4}} r^{\frac{2 (1 - \delta) \vartheta + 1}{4}} n^{- \frac{1}{2}},
	c_{\delta, 2} c_{d, \delta}^{\frac{2 \delta}{1 + \delta}} \lambda^{- \frac{1}{2 (1 + \delta)}} r^{\frac{1}{2 (1 + \delta)}} n^{- \frac{1}{1 + \delta}} \Bigr\}.
	\end{align*}
	It can be easily concluded that, for this $\varphi_n$,
	the condition $\varphi_n(4r) \leq c \varphi_n(r)$ is satisfied
	if $c \geq 2 \sqrt{2}$. This implies that the statements of the Peeling Theorem 7.7 in \cite{StCh08} still hold for $\varphi_n(4r) \leq 2 \sqrt{2} \varphi_n(r)$. In other words,
	the constant $2$ in front of $\varphi_n(r)$ in that theorem should be replaced by $2 \sqrt{2}$
	for our case. Accordingly, the assumption concerning  $\varphi_n$ and $r$ in Theorem 7.20 in \cite{StCh08} should be modified to $\varphi_n(4r) \leq 2 \sqrt{2} \varphi_n(r)$ and
	\begin{align*}
	r > \max \biggl\{ 75 \varphi_n(r), 
	\biggl( \frac{72 V \tau}{n} \biggr)^{\frac{1}{2-\vartheta}}, 
	\frac{5 B_0 \tau}{n}, r^* \biggr\}, 
	\end{align*}
	respectively.
	Now, using $(1 - \delta)(1 - \vartheta) > 0$, some elementary calculations show that the condition $r \geq 75 \varphi_n(r)$ is satisfied if
	\begin{align*}
	r \geq C \lambda^{-\frac{1}{3 - 2 (1 - \delta) \vartheta}} 
	n^{- \frac{2}{3 - 2 (1 - \delta) \vartheta}},
	\end{align*}
	where
	\begin{align*}
	C = \max \biggl\{ \Bigl( 75 c_{\delta, 1} c_{d, \delta}^\delta V^{\frac{1 - \delta}{2}} \Bigr)^\frac{4}{3 - 2 (1 - \delta) \vartheta}, 
	\Bigl( 75 c_{\delta, 2} c_{d, \delta}^{\frac{2 \delta}{1 + \delta}} \Bigr)^{\frac{2 (1 + \delta)}{1 + 2 \delta}}
	\biggr\}.
	\end{align*}
	Finally, from the definition \eqref{rstar}, we obviously have $r^* \leq \lambda p_{Z_t}^{*2} + \mathcal{R}_{L, P}(g_{Z_t}^*) - \mathcal{R}_{L, \mathrm{P}}^*$. The assertion follows from Theorem 7.20 in \cite{StCh08} by
	taking $K = \max \{2B, 3C\}$.
\end{proof}

\begin{proof}[Proof of Theorem \ref{ConvergenceRates}]
	Theorem \ref{OracleInequality} together with	
	Proposition \ref{ApproxError} implies that with probability $\mathrm{P} \otimes \mathrm{P}_Z$ at least $1 - 4 e^{- \tau}$, there holds
	\begin{align} \label{OracleApprox}
	& \lambda p_{Z_t}^2 + \mathcal{R}_{L,\mathrm{P}}(g_{Z_t}) - \mathcal{R}_{L,\mathrm{P}}^*
	\nonumber\\
	& \leq 9 c_{d, \beta} \ e^{\frac{8 d \beta \tau}{k (c_{T} \beta + 8 d)}} \lambda^{\frac{c_T \beta}{c_T \beta + 8 d}}+ K \bigl( \lambda^{-1} n^{-2} \bigr)^{\frac{1}{3 - 2 (1 - \delta) \vartheta}} + 3 \bigl( 72 V \tau n^{-1} \bigr)^{\frac{1}{2 - \vartheta}} + 15 \tau n^{-1},
	\end{align}
	where $c_{d,\beta}$ is the constant as in Proposition \ref{ApproxError}, 
	$K$ and $V$ are the constants as in Theorem \ref{OracleInequality}.
	Now, minimizing the first two terms in the right hand side of \eqref{OracleApprox}
	with respect to $\lambda$, we obtain
	\begin{align*}
	9 c_{d,\beta} \ e^{\frac{8 d \beta \tau}{k (c_T \beta + 8 d)}}
	\lambda^{\frac{c_T \beta}{c_T \beta + 8 d}}
	+ K \bigl( \lambda^{-1} n^{-2} \bigr)^{\frac{1}{3 - 2 (1 - \delta) \vartheta}}
	\leq c_{d,\vartheta,\beta,\delta} \  c_{d,\vartheta,\beta,\delta,\tau}^{1/k} \  n^{-\frac{c_T\beta}{c_T \beta (2 - (1 - \delta) \vartheta) + 4 d}},  
	\end{align*}
	with the constant $c_{d, \vartheta, \beta, \delta, \tau}$ and $c_{d,\vartheta,\beta,\delta}$ defined by
	\begin{align*}
	& c_{d,\vartheta,\beta,\delta,\tau} 
	:= e^{{\frac{4 \beta d \tau}{c_T \beta (2 - (1 - \delta) \vartheta) + 4d}}}> 1, \\
	& c_{d,\vartheta,\beta,\delta} := \frac{2K (c_T \beta (2 - (1 - \delta) \vartheta) + 4 d)}{c_T \beta (3 - 2 (1 - \delta) \vartheta)}
	\biggl( \frac{9 c_T c_{d,\beta} \beta (3 - 2 (1 - \delta) \vartheta)}{K (c_T \beta + 8 d)} \biggr)^{\frac{c_T \beta + 8 d}{2 (c_T \beta (2 - (1 - \delta) \vartheta) + 4 d)}}.
	\end{align*}
	Here, the minimum is attained when choosing
	\begin{align*}
	\lambda = 
	\biggl( \frac{K (8d/(c_T c_\beta \beta d^\beta))^{\frac{8 d}{c_T \beta + 8 d}} }
	{9(3 - 2 (1 - \delta) \vartheta)} \biggr)^{\frac{(c_T \beta + 8 d)(3 - 2 (1 - \delta) \vartheta)}{2(c_T \beta (2 - (1 - \delta) \vartheta) + 4 d)}}
	e^{- \frac{4 \beta d \tau (3 - 2 (1 - \delta) \vartheta)}{k (c_T \beta (2 - (1 - \delta) \vartheta) + 4 d)}}
	n^{-\frac{c_T\beta+8d}{c_T \beta (2 - (1 - \delta) \vartheta) + 4 d}}.
	\end{align*}
	Now, for all $n > 1$,
	the inequality \eqref{OracleApprox} becomes
	\begin{align}
	\lambda p_{Z_t}^2 & + \mathcal{R}_{L,\mathrm{P}}(g_{Z_t}) - \mathcal{R}_{L,\mathrm{P}}^*
	\nonumber\\
	& \leq c_{d,\vartheta,\beta,\delta} \  c_{d,\vartheta,\beta,\delta,\tau}^{1/k} \ 
	n^{-\frac{c_T\beta}{c_T \beta (2 - (1 - \delta) \vartheta) + 4 d}}
	+3 \big(72 V \tau n^{-1} \big)^{\frac{1}{2 - \vartheta}} 
	+ 15 \tau n^{-1}
	\label{Estimation}
	\\
	& \leq C n^{- \frac{c_T \beta}{c_T \beta (2 - (1 - \delta) \vartheta) + 4 d}},
	\nonumber
	\end{align}
	with
	$C : = c_{d, \vartheta, \beta, \delta} \ c_{d, \vartheta, \beta, \delta, \tau}^{1/k} + 3 (72 V \tau)^{1/(2-\vartheta)} + 15 \tau$.
\end{proof}

\begin{proof}[Proof of Theorem \ref{the::OptimalConvergenceRates}]
Similar as the proof of Theorem \ref{ConvergenceRates}, under the Assumptions \ref{MarginNoiseExponent} and \ref{ass::DistanceControlsNoise}, Lemma 8.23 in \cite{StCh08} implies that $\mathrm{P}$ has margin exponent $q := \gamma \alpha$ and margin-noise exponent $\beta := \gamma(\alpha + 1)$. 
\end{proof}

\begin{proof}[Proof of Theorem \ref{ConvergenceRateForest}]
	For the classification loss $L$, there holds 
	\begin{align*}
	\mathcal{R}_{L, \mathrm{P}} (f) - \mathcal{R}_{L, \mathrm{P}}^* = \frac{1}{2} \int_\mathcal{X} \big|2 \eta(x) - 1 \big| \big|f(x) - f_{L, \mathrm{P}}^*(x) \big|\ d \mathrm{P}_X(x),
	\end{align*}
	see e.g. Example 3.8 in \cite{StCh08}. 
	Using the definition \eqref{vPlusvMinus} and Markov's inequality, we obtain
	\begin{align*}
	& \mathcal{R}_{L,\mathrm{P}}(f_Z) - \mathcal{R}_{L, \mathrm{P}}^*
	\\
	& = \frac{1}{2} \int |2 \eta(x) - 1| |f_Z - f_{L, \mathrm{P}}^*| \ d \mathrm{P}_X(x)
	\\
	& = \int \big|2 \eta(x) - 1 \big| \big| \eins_{\{v_{+}(x) \geq \frac{m}{2}\}} 
	- \eins_{\{ \eta(x) \geq \frac{1}{2}\}} \big| \ d \mathrm{P}_X(x)
	\\
	& = \int |2 \eta(x) - 1| \eins_{\{v_{+}(x) \geq \frac{m}{2}\}} 
	\eins_{\{\eta(x) < \frac{1}{2}\}} 
	+ |2 \eta(x) - 1| \eins_{\{v_{-}(x) > \frac{m}{2}\}} 
	\eins_{\{\eta(x) \geq \frac{1}{2}\}}  \ d \mathrm{P}_X(x)
	\\
	& \leq \frac{2}{m} \int |2 \eta(x) - 1| v_{+}(x) \eins_{\{\eta(x) < \frac{1}{2} \}}
	+ |2 \eta(x) - 1| v_{-}(x) \eins_{\{\eta(x) \geq \frac{1}{2}\}} \ d \mathrm{P}_X(x)
	\\
	& = \frac{2}{m} \sum_{t=1}^m 
	\int |2 \eta(x) - 1| \eins_{\{g_{Z_t} = 1\}} 
	\eins_{\{\eta(x) < \frac{1}{2}\}} + |2 \eta(x) - 1| \eins_{\{g_{Z_t} = -1\}} 
	\eins_{\{\eta(x) \geq \frac{1}{2}\}} \ d \mathrm{P}_X(x)
	\\
	& = \frac{2}{m} \sum_{t=1}^m 
	\int \big|2 \eta(x) - 1 \big| \big| \eins_{\{g_{Z_t}(x) = 1\}} 
	- \eins_{\{\eta(x) \geq \frac{1}{2}\}} \big| \ d \mathrm{P}_X(x)
	\\
	& = \frac{2}{m} \sum_{t=1}^m 
	\bigl( \mathcal{R}_{L, \mathrm{P}}(g_{Z_t}) - \mathcal{R}_{L, \mathrm{P}}^* \bigr).
	\end{align*}
	Now, denote the estimation in \eqref{Estimation} by $E$, then the union bound yields that, for all $\tau > 0$, there holds
	\begin{align*}
	\mathrm{P} \otimes \mathrm{P}_Z
	\Bigl( \mathcal{R}_{L,\mathrm{P}}(f_Z) - \mathcal{R}_{L,\mathrm{P}}^* > 2 E
	\Bigr)
	\leq \sum_{t=1}^m \mathrm{P} \otimes \mathrm{P}_Z
	\Bigl( \mathcal{R}_{L,\mathrm{P}}(g_{Z_t}) - \mathcal{R}_{L,\mathrm{P}}^* 
	>  E \Bigr)
	\leq 4 m e^{-\tau}.
	\end{align*}
	Consequently, 
	with probability $\mathrm{P} \otimes \mathrm{P}_Z$ at least $1 - 4 e^{- \tau}$, there holds
	\begin{align}
	& \mathcal{R}_{L,\mathrm{P}}(f_Z) - \mathcal{R}_{L,\mathrm{P}}^*
	\nonumber\\
	& \le 2 c_{d,\vartheta,\beta,\delta} \ c_{m, d, \vartheta, \beta, \delta, \tau}^{1/k} \ 
	n^{-\frac{c_T\beta}{c_T \beta (2 - (1 - \delta) \vartheta) + 4 d}}
	+6 \big(72 V (\tau + \log m) n^{-1} \big)^{\frac{1}{2 - \vartheta}} 
	+ 30 (\tau + \log m) n^{-1}
	\nonumber\\
	& \leq C n^{- \frac{c_T \beta}{c_T \beta (2 - (1 - \delta) \vartheta) + 4 d} },
	\nonumber
	\end{align}
	where the constants $c_{m, d, \vartheta, \beta, \delta, \tau} := e^{{\frac{4 \beta d (\tau + \log m)}{c_T \beta (2 - (1 - \delta) \vartheta) + 4d}}} > 1$ and $C :=  2 c_{d,\vartheta,\beta,\delta} \ c_{m, d, \vartheta, \beta, \delta, \tau}^{1/k} + 6 (72 V$ $(\tau + \log m))^{\frac{1}{2-\vartheta}} + 30(\tau + \log m)$. This proves the assertion.
\end{proof}

\begin{proof}[Proof of Theorem \ref{the::ForestOptimalConvergenceRates}]
	Similar as the proof of Theorem \ref{ConvergenceRates}, it can be proved that
	if Assumptions \ref{MarginNoiseExponent} and \ref{ass::DistanceControlsNoise} hold, then Lemma 8.23 in \cite{StCh08} implies that $\mathrm{P}$ has margin exponent $q:= \gamma \alpha$ and margin-noise exponent $\beta := \gamma(\alpha + 1)$. 
\end{proof}

\section{Conclusion}\label{sec::conclusion}
In the present paper, we propose an algorithm named \emph{best-scored random forest} for binary classification problems. The terminology \emph{best-scored} means to select one tree with the best empirical performance out of certain number of purely random tree candidates. 
From the theoretical point of view, learning theory analysis is conducted within the framework of regularized empirical risk minimization with penalty on the number of splits. Here, we achieve almost optimal convergence rates under proper assumptions on margin conditions. 
Furthermore, a counterexample is presented to verify that all dimensions must have the probability to be split in order to ensure the consistency of the purely random tree. 
Concerning with numerical experiments,
we improve the original purely random splitting criterion to an adaptive one
leading to the efficiency of the number of splits.
Finally, to further verify the effectiveness of our algorithm, comparisons between best-scored random forest and other state-of-art classification methods, such as Breiman's random forest, 
extremely randomized trees,
$k$-nearest neighbors, and 
support vector machines 
are also conducted.

\bibliographystyle{chicago}

\end{document}